\documentclass[preprint]{statml}

\usepackage[round,authoryear]{natbib}
\usepackage{longtable}
\usepackage{makecell}
\usepackage{algorithm}
\usepackage{algorithmic}
\usepackage{float}
\usepackage{mathrsfs}
\usepackage{bm}
\usepackage{empheq}

\usetikzlibrary{arrows.meta,positioning,calc,shapes.geometric,backgrounds,fadings,decorations.pathreplacing,fit}
\definecolor{learnblue}{HTML}{1F77B4}
\definecolor{learngreen}{HTML}{2CA02C}
\definecolor{learnorange}{HTML}{FF7F0E}
\definecolor{learnpurple}{HTML}{9467BD}
\definecolor{querypurple}{HTML}{6A3D9A}
\definecolor{panelstroke}{HTML}{C9CCD6}
\definecolor{panelfill}{HTML}{FBFCFE}
\definecolor{iconbluebg}{HTML}{DCE9F6}
\definecolor{icongreenbg}{HTML}{DDF0DD}
\definecolor{iconorangebg}{HTML}{FCE6CE}
\definecolor{iconpurplebg}{HTML}{ECE2F4}
\definecolor{peoplegray}{HTML}{BFC4CE}
\definecolor{peopleblue}{HTML}{AEC7E8}
\definecolor{arrowcol}{HTML}{5B6270}
\definecolor{baseline}{HTML}{B7BDC8}
\definecolor{bandpurple}{HTML}{C9B8E0}
\tikzset{
  person/.pic={
    \fill (0,0.165) circle (0.082);
    \fill[rounded corners=1.4pt]
      (-0.118,-0.205) -- (-0.118,-0.05)
      .. controls (-0.118,0.075) and (0.118,0.075) .. (0.118,-0.05)
      -- (0.118,-0.205) -- cycle;
  }
}
\tikzset{
  checkmark/.pic={
    \draw[white,line width=1.1pt,line cap=round,line join=round]
      (-0.085,0.0) -- (-0.02,-0.075) -- (0.1,0.085);
  }
}
\tikzset{
  anntree/.pic={
    \draw[arrowcol!55,line width=0.5pt]
      (0,0.62)--(-0.5,0.28) (0,0.62)--(0.5,0.28)
      (-0.5,0.22)--(-0.72,-0.12) (-0.5,0.22)--(-0.28,-0.12)
      (0.5,0.22)--(0.28,-0.12) (0.5,0.22)--(0.72,-0.12);
    \fill[learnblue!75] (0,0.62) circle (0.1);
    \fill[learnblue!50] (-0.5,0.25) circle (0.085);
    \fill[learnblue!50] (0.5,0.25) circle (0.085);
    \fill[learnblue!32] (-0.72,-0.15) circle (0.075);
    \fill[learnblue!32] (-0.28,-0.15) circle (0.075);
    \fill[learnblue!32] (0.28,-0.15) circle (0.075);
    \fill[learnblue!32] (0.72,-0.15) circle (0.075);
  }
}
\definecolor{weightblue}{HTML}{1F77B4}
\definecolor{eventred}{HTML}{D62728}
\definecolor{riskgreen}{HTML}{2CA02C}
\definecolor{proxorange}{HTML}{FF7F0E}
\definecolor{hazardpurple}{HTML}{9467BD}
\definecolor{statmlProjectLink}{HTML}{D4512A}
\newcommand{\ind}{\mathbf{1}}
\newcommand{\appref}[1]{Appendix~\ref{#1}}

\title{KESurv: A Kernel Ensemble Method\texorpdfstring{\\}{ }for Patient-Specific Survival Prediction}
\shorttitle{KESurv}
\addauthor{Rahul Goswami}{1}
\addaffil{1}{Department of Mathematics, Indian Institute of Technology Guwahati}
\addlogo[https://www.iitg.ac.in/]{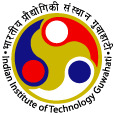}
\correspondence{rahul@statml.in}
\addresource{\textcolor{statmlProjectLink}{\faGlobe}}{
  \textcolor{statmlProjectLink}{\bfseries Project page}}{https://kesurv.statml.in}

\begin{document}
\begin{abstract}
 Predicting patient-specific survival functions is crucial for clinicians in making informed decisions about patient care and treatment strategies. Among the various models available, the Survival Forest has demonstrated significant effectiveness in numerous scenarios. In this work, we propose an ensemble method that leverages the strengths of the Survival Forest as the master model, complemented by several base models. This ensemble incorporates the Beran estimator, a type of kernel estimator, to enhance predictions of patient-specific survival curves. We evaluated the performance of our proposed model using four distinct healthcare datasets. The results highlight the superiority of our ensemble method over baseline models in both calibration and ranking across most datasets. The findings suggest that our approach offers a more accurate and reliable estimation of patient-specific survival functions, providing a valuable tool for clinical decision-making.
 \par\medskip\noindent\textbf{Keywords:} Survival Curve, Ensemble, Patient Care, Kernel Learning
\end{abstract}
\maketitle

\section{Introduction}
Survival analysis is a fundamental tool in medical and epidemiological research for modeling the time until an event of interest occurs. Such events may include mortality, disease progression, relapse, or other clinically meaningful outcomes. Let $T$ denote the event time and $X$ denote a vector of patient covariates. The conditional survival function
$
    S(t\mid x)=\mathbb{P}(T>t\mid X=x)
$
describes the probability that a patient with features $x$ survives beyond time $t$. Estimating this function yields Patient-Specific Survival Curves (PSSCs), which provide individualized prognostic information and support data-driven clinical decision-making \citep{cox1972regression,kalbfleisch2011statistical,frank2015regression}.
PSSCs have been widely used across medical domains where patient heterogeneity plays an important role. For example, gene expression signatures have been used to improve survival prediction in breast cancer \citep{van2002gene}, while multi-site studies in lung cancer have demonstrated the prognostic value of gene expression-based survival models \citep{director2008gene}. Survival analysis has also been used in colorectal cancer to distinguish disease-free survival from overall survival when evaluating adjuvant therapies \citep{sargent2005disease}. These examples illustrate the practical importance of accurate patient-specific survival estimation in clinical prognosis and treatment planning.
A large class of survival models estimates PSSCs by imposing structural assumptions on the conditional hazard function. Denoting the hazard by
$
    \lambda(t\mid x)=-\frac{d}{dt}\log S(t\mid x),
$
the Cox proportional hazards model assumes
$
    \lambda(t\mid x)=\lambda_0(t)\exp(\beta^\top x),
$
where $\lambda_0(t)$ is the baseline hazard and $\beta$ is a vector of regression coefficients \citep{cox1972regression}. After estimating $\lambda_0(t)$ and $\beta$ from training data, the corresponding survival curve can be recovered through
$
    S(t\mid x)=\exp\left(-\int_0^t \lambda(s\mid x)\,ds\right).
$
Other classical approaches, such as the Aalen additive model \citep{aalen1989linear} and accelerated failure time models \citep{kalbfleisch2011statistical}, similarly rely on structural assumptions about the relationship between covariates and survival outcomes.
More recently, survival analysis has benefited from machine learning methods that relax some of these assumptions. For instance, DeepSurv replaces the linear Cox risk function with a nonlinear function learned by a neural network \citep{katzman2018deepsurv}, while hierarchical generative models have been used to directly model patient-specific survival distributions \citep{ranganath2016deep}. Ensemble-based approaches, including Random Survival Forests (RSF) \citep{ishwaran2008random} and gradient-boosted survival models based on XGBoost \citep{chen2016xgboost}, have also become popular due to their ability to capture nonlinear covariate effects and interactions.
Another important direction is nonparametric survival estimation. The Beran estimator extends the Kaplan--Meier estimator by incorporating covariate information through local neighborhoods or kernel weights. While the Kaplan--Meier estimator estimates the marginal survival function
$
    S_{\mathrm{marg}}(t)=\mathbb{P}(T>t),
$
the Beran estimator estimates the conditional survival function $S(t\mid x)$ by giving higher weight to training subjects whose feature vectors are close to the test point $x$. This approach avoids strong parametric assumptions on the survival function and has been studied theoretically in settings such as nearest-neighbor survival estimation \citep{chen2019nearest}.
Despite their flexibility, nearest-neighbor and kernel-based survival estimators often face difficulties in high-dimensional settings. As the feature dimension increases, local neighborhoods become sparse, distances between observations become less discriminative, and kernel weights become sensitive to bandwidth choices. Consequently, local averaging may become unstable: small neighborhoods can lead to high variance, while larger neighborhoods can oversmooth the survival curve and increase bias. These challenges motivate methods that replace raw-feature proximity with a more informative notion of similarity.
In this study, we introduce Kernel Ensemble Survival (KESurv), a survival analysis framework that combines adaptive neighborhood construction with nonparametric survival estimation. Instead of defining proximity directly in the original feature space, KESurv uses a prediction-induced proximity obtained from a collection of diverse survival learners, inspired by the Combined Regression Aggregation idea of \citep{biau2016cobra}. This learned proximity acts as a collective indicator of similarity between training observations and a test patient. The resulting neighborhood is then used with survival estimators that produce patient-specific survival probabilities.

Several survival methods combine learned similarities, neighborhood construction, and local survival aggregation. \citet{chen2019nearest} studies nearest-neighbor and kernel survival estimators, including an adaptive kernel learned using random survival forests. Survival Kernets \citep{chen2024survival} constructs an $\epsilon$-net of representative exemplars in a learned embedding space, groups training observations around these exemplars, and combines nearby cluster summaries through truncated kernel weights to estimate survival probabilities. It also distinguishes the split construction covered by its theoretical guarantee from implementations evaluated with and without splitting. Area-norm COBRA \citep{goswami2023area} compares predicted survival curves through an area-based dissimilarity, selects reference observations by consensus across base learners, and applies Kaplan–Meier aggregation, using disjoint cohorts for base-learner fitting and aggregation. These methods provide related approaches to constructing and weighting local information for survival estimation. KESurv uses survival-curve consensus selection to filter a forest-induced adaptive neighborhood, retaining the forest-induced weights within a weighted Kaplan–Meier estimator. The distinction between our empirical no-split construction and the
honest/cross-fitted construction considered in our theoretical analysis
is clarified in Section~\ref{sec:experimental_setup} and
Appendix~\ref{app:theory}. The main contributions are:
(i) We propose KESurv, an ensemble-based method for estimating
Patient-Specific Survival Curves;
(ii) We use a COBRA-style survival-curve agreement selector that filters
an existing survival-forest-induced adaptive nearest-neighbor (ANN)
neighborhood, retaining a training observation only when a sufficient
number of diverse base survival learners predict a survival curve close
to that of the query patient;
(iii) We combine this consensus selector with the forest-induced
adaptive kernel inside a weighted Kaplan--Meier estimator to improve
patient-specific survival prediction; and
(iv) We evaluate the proposed method on four datasets and compare it
against standard survival baselines, including Random Survival Forests
and adaptive nearest-neighbor methods. The remainder of this paper is organized as follows.
Section~\ref{background} reviews survival analysis and the Beran estimator.
Section~\ref{proposed_methods} presents the proposed Kernel Ensemble
Survival method. Section~\ref{experiments} describes the datasets,
baseline models, experimental setup, and results.
Finally, Section~\ref{conc} concludes the paper and discusses future
research directions.
\section{Background}
\label{background}
Survival analysis studies time-to-event outcomes in settings where the event of interest may not be observed for every subject. Let $T\in\mathbb{R}_{+}$ denote the true event time and let $X\in\mathcal{X}\subseteq\mathbb{R}^{p}$ denote a vector of covariates. For a subject with covariates $x$, the central object of interest is the conditional survival function $S(t\mid x)$, which gives the probability that the subject survives beyond time $t$. Estimating this function for each individual produces a Patient-Specific Survival Curve (PSSC), which is the main predictive object considered in this paper.
In medical and epidemiological datasets, the true event time $T$ is often right-censored. Right censoring occurs when the event has not been observed by the end of follow-up, so the exact event time is unknown. To represent this situation, let $C$ denote the censoring time. Rather than observing $T$ directly for every subject, we observe the follow-up time $Y=\min(T,C)$ and an event indicator $\delta$, where $\delta=1$ indicates that the event was observed and $\delta=0$ indicates that the observation was censored. The basic notation used throughout this paper is summarized in Eq.~\ref{eq:survival_notation_panel}.

\begin{subequations}
\label{eq:survival_notation_panel}
\noindent
\begin{minipage}{0.48\linewidth}
  \begin{align}
    \label{eq:conditional_survival}
    S(t\mid x) &= \mathbb{P}(T>t\mid X=x) \\[1em]
    \label{eq:observed_dataset}
    \mathcal{D}_n &= \{(X_i,Y_i,\delta_i)\}_{i=1}^{n}
  \end{align}
\end{minipage}
\hfill
\begin{minipage}{0.48\linewidth}
  \begin{align}
    \label{eq:observed_time}
    Y &= \min(T,C), \quad \delta=\ind\{T\leq C\} \\[1em]
    \label{eq:conditional_independent_censoring}
    T &\perp C\mid X
  \end{align}
\end{minipage}
\vspace{5mm}
\end{subequations}

The dataset $\mathcal{D}_n$ consists of $n$ independent training subjects, where each subject is represented by a covariate vector $X_i$, an observed time $Y_i$, and an event indicator $\delta_i$. The assumption $T\perp C\mid X$ is the standard conditional independent censoring assumption. It states that, after conditioning on the observed covariates, the censoring mechanism does not provide additional information about the event time. This assumption is important because it allows the conditional survival curve $S(t\mid x)$ to be estimated from censored observations.
\subsection{Kernel Survival Estimator}
The Kaplan--Meier estimator estimates a marginal survival curve without using covariate information. In contrast, the Beran estimator extends the Kaplan--Meier idea to the conditional setting by assigning larger weights to training observations whose covariates are close to a query point $x$. This produces a local survival estimator for the specific subject represented by $x$.
To construct the kernel survival estimator, we require a nonnegative kernel function $K$, a bandwidth parameter $h>0$, and a distance or dissimilarity function $\rho$. The kernel controls how much influence a training subject has on the survival estimate for $x$, while the bandwidth controls the size of the local neighborhood. A small bandwidth leads to a more local estimator with potentially lower bias but higher variance. A large bandwidth includes more observations and may reduce variance, but can oversmooth subject-specific survival patterns.
The main ingredients of the kernel estimator are summarized in Eq.~\ref{eq:kernel_quantity_panel}. The blue term represents the normalized local kernel weight, the red term represents the weighted number of observed deaths, and the green term represents the weighted number of subjects at risk.

\begin{subequations}
\label{eq:kernel_quantity_panel}
\noindent
\begin{minipage}{0.48\linewidth}
  \begin{equation}
  \label{eq:kernel_weight}
  \footnotesize
  \boxed{
  \begin{aligned}
    \textcolor{weightblue}{\omega_j^K(x;h)}
    &=
    \underbrace{
    \frac{
      \textcolor{weightblue}{
      K\!\left(\rho(x,X_j)/h\right)
      }
    }{
      \sum_{q=1}^{n}
      K\!\left(\rho(x,X_q)/h\right)
    }}_{\text{local normalized similarity}}
  \end{aligned}
  }
  \end{equation}
\end{minipage}
\begin{minipage}{0.52\linewidth}
  \begin{equation}
  \label{eq:kernel_event_times}
  \small
  \boxed{
  \begin{aligned}
    \mathcal{Y}_K(x;h)
    &=
    \{Y_j:
    \underbrace{
    \textcolor{eventred}{\delta_j=1}
    }_{\textcolor{eventred}{\substack{\text{event}\\\text{observed}}}},
    \;
    \underbrace{
    \textcolor{weightblue}{\omega_j^K(x;h)>0}
    }_{\textcolor{weightblue}{\substack{\text{locally}\\\text{weighted}}}}
    \}
  \end{aligned}
  }
  \end{equation}
\end{minipage}
\vspace{0.5em}

\noindent
\begin{minipage}{0.48\linewidth}
  \begin{equation}
  \label{eq:kernel_death_count}
  \footnotesize
  \boxed{
  \begin{aligned}
    \textcolor{eventred}{d_K(t\mid x;h)}
    &=
    \sum_{j=1}^{n}
    \underbrace{
    \textcolor{weightblue}{\omega_j^K(x;h)}
    }_{\textcolor{weightblue}{\text{local weight}}}
    \,
    \underbrace{
    \textcolor{eventred}{\delta_j\ind\{Y_j=t\}}
    }_{\textcolor{eventred}{\text{death at }t}}
  \end{aligned}
  }
  \end{equation}
\end{minipage}
\begin{minipage}{0.52\linewidth}
  \begin{equation}
  \label{eq:kernel_risk_count}
  \small
  \boxed{
  \begin{aligned}
    \textcolor{riskgreen}{r_K(t\mid x;h)}
    &=
    \sum_{j=1}^{n}
    \underbrace{
    \textcolor{weightblue}{\omega_j^K(x;h)}
    }_{\textcolor{weightblue}{\text{local weight}}}
    \,
    \underbrace{
    \textcolor{riskgreen}{\ind\{Y_j\geq t\}}
    }_{\textcolor{riskgreen}{\text{at risk at }t}}
  \end{aligned}
  }
  \end{equation}
\end{minipage}
\vspace{0.5em}
\begin{equation}
\label{eq:kernel_survival_estimator}
\small
\boxed{
\begin{aligned}
  \widehat{S}_K(t\mid x;h)
  &=
  \prod_{s\in\mathcal{Y}_K(x;h),\,0<s\leq t}
  \left(
  1-
  \underbrace{
  \textcolor{hazardpurple}{
  \frac{
    \textcolor{eventred}{d_K(s\mid x;h)}
  }{
    \textcolor{riskgreen}{r_K(s\mid x;h)}
  }}
  }_{\textcolor{hazardpurple}{\text{local hazard increment}}}
  \right)
\end{aligned}
}
\end{equation}
\end{subequations}
Equation~\ref{eq:kernel_weight} defines the normalized kernel weight assigned to observation $j$ when estimating the survival curve for query point $x$. These weights are nonnegative and sum to one when the denominator is nonzero. Equation~\ref{eq:kernel_event_times} defines the set of locally contributing observed event times. Equation~\ref{eq:kernel_death_count} computes the weighted number of deaths at time $t$, while Eq.~\ref{eq:kernel_risk_count} computes the weighted number of individuals still at risk just before time $t$. Their ratio gives a local estimate of the hazard increment. Finally, Eq.~\ref{eq:kernel_survival_estimator} combines these local hazard increments through a Kaplan--Meier product to produce the patient-specific survival estimate.
This estimator can be viewed as a localized Kaplan--Meier estimator. Instead of assigning equal importance to all training subjects, it emphasizes subjects that are close to the query point $x$ according to the distance function $\rho$. However, this dependence on a fixed distance metric can be problematic in high-dimensional clinical data, where raw-feature distances may not accurately represent similarity in survival behavior.
\subsection{Adaptive Nearest Neighbor through Tree Ensemble}
The kernel survival estimator in Eq.~\ref{eq:kernel_survival_estimator} defines locality using the fixed distance function $\rho$ and the bandwidth $h$. Although this is intuitive, it can be restrictive in high-dimensional biomedical settings. Two patients may be close in the original feature space but have different survival outcomes, while two patients that are far apart in raw covariates may share similar risk profiles. This motivates replacing raw-feature proximity with a learned notion of proximity that is more aligned with survival outcomes.
Adaptive nearest-neighbor methods address this limitation using a tree ensemble. Let
$\mathscr{T}=\{\mathcal{T}_1,\mathcal{T}_2,\ldots,\mathcal{T}_{|\mathscr{T}|}\}$ denote a survival-forest ensemble trained on bootstrap samples
$\mathcal{B}_1,\mathcal{B}_2,\ldots,\mathcal{B}_{|\mathscr{T}|}$ drawn from the observed training data $\mathcal{D}_n$. Each tree partitions the feature space into terminal leaf regions. If a query point $x$ and a training observation $X_j$ repeatedly fall into the same terminal leaves across the forest, then the forest considers them similar. The adaptive nearest-neighbor estimator uses this repeated co-membership to construct a data-adaptive kernel weight.
The forest-induced adaptive weight used in our method is motivated by the adaptive nearest-neighbor interpretation of random forests.
In particular, \citet{lin2006random} showed that random forests can be viewed as adaptive nearest-neighbor procedures, where observations repeatedly sharing terminal nodes with a query point receive larger influence.
A closely related weighting view appears in quantile regression forests, where \citet{meinshausen2006quantile} express forest predictions through weights induced by terminal-node co-membership.
For right-censored outcomes, Random Survival Forests extend this idea to survival data by growing survival trees on bootstrap samples and aggregating tree-level survival estimates \citep{ishwaran2008random}.
More generally, forest-induced adaptive weights have also been used as local weighting functions for nonparametric estimation \citep{athey2019generalized}.
For tree $\mathcal{T}_k$, let $\mathcal{R}_k(x)$ denote the terminal-node region containing the query point $x$.
Thus, training observation $X_j$ is considered close to $x$ in tree $\mathcal{T}_k$ if $X_j\in\mathcal{R}_k(x)$.
Let $\nu_{k,j}$ denote the number of times observation $j$ appears in the bootstrap sample $\mathcal{B}_k$ used to grow tree $\mathcal{T}_k$.
The adaptive weight construction is summarized in Eq.~\ref{eq:ann_weight_panel}.
The first quantity is the bootstrap-weighted size of the terminal node containing $x$; the second is the tree-level contribution of observation $j$; the third averages this contribution across trees; and the final quantity is the adaptive kernel itself.

\begin{subequations}
\label{eq:ann_weight_panel}
\noindent
\begin{minipage}{0.5\linewidth}
\begin{equation}
\label{eq:ann_leaf_size}
\footnotesize
\boxed{
\begin{aligned}
L_k(x)
&=
\sum_{q=1}^{n}
\underbrace{
\nu_{k,q}
}_{\substack{\text{bootstrap}\\\text{multiplicity}}}
\,
\underbrace{
\ind\{X_q\in\mathcal{R}_k(x)\}
}_{\substack{\text{same leaf}\\\text{as }x}}
\end{aligned}
}
\end{equation}
\end{minipage}
\hfill
\begin{minipage}{0.5\linewidth}
\begin{equation}
\label{eq:ann_tree_weight}
\footnotesize
\boxed{
\begin{aligned}
a_{k,j}(x)
&=
\underbrace{
\nu_{k,j}
\ind\{X_j\in\mathcal{R}_k(x)\}
}_{\substack{\text{in-bag count}\\\text{inside leaf}}}
\,/\,
\underbrace{
L_k(x)
}_{\substack{\text{leaf}\\\text{mass}}}
\end{aligned}
}
\end{equation}
\end{minipage}
\noindent
\begin{minipage}{0.48\linewidth}
\begin{equation}
\label{eq:ann_average_weight}
\small
\boxed{
\begin{aligned}
A_j(x)
&=
|\mathscr{T}|^{-1}
\sum_{k=1}^{|\mathscr{T}|}
a_{k,j}(x)
\end{aligned}
}
\end{equation}
\end{minipage}
\hfill
\begin{minipage}{0.48\linewidth}
\begin{equation}
\label{eq:ann_adaptive_kernel}
\small
\boxed{
\begin{aligned}
\textcolor{proxorange}{\widehat{K}_{\mathrm{ANN}}(x,X_j)}
&=
\textcolor{proxorange}{A_j(x)}
\end{aligned}
}
\end{equation}
\end{minipage}
\end{subequations}

In Eq.~\ref{eq:ann_leaf_size}, $L_k(x)$ denotes the bootstrap-weighted size of the terminal node containing $x$ in tree $\mathcal{T}_k$.
Equation~\ref{eq:ann_tree_weight} defines the tree-level adaptive weight assigned to training observation $X_j$.
This weight is positive only when $X_j$ belongs to the same terminal node as $x$, and it is normalized by the bootstrap-weighted mass of that node.
Equation~\ref{eq:ann_average_weight} averages these tree-level weights over the forest, and Eq.~\ref{eq:ann_adaptive_kernel} defines the resulting adaptive forest-induced kernel $\widehat{K}_{\mathrm{ANN}}(x,X_j)$.
When $L_k(x)>0$ for all trees, the weights satisfy $\widehat{K}_{\mathrm{ANN}}(x,X_j)\geq 0$ and $\sum_{j=1}^{n}\widehat{K}_{\mathrm{ANN}}(x,X_j)=1$.
Thus, the forest transforms the original feature space into an outcome-aware neighborhood structure.
Once the adaptive weights have been obtained, the survival curve is estimated using the same weighted Kaplan--Meier principle as in the kernel estimator. The fixed kernel weights are replaced by the adaptive forest-induced weights.
The adaptive estimator therefore preserves the structure of the kernel survival estimator while changing the notion of locality. The standard kernel estimator relies on geometric closeness in the original covariate space, whereas the adaptive estimator relies on forest-induced closeness. This learned proximity is based on repeated co-membership in terminal nodes and can capture nonlinear interactions between covariates. Consequently, the adaptive nearest-neighbor estimator is better suited to high-dimensional clinical settings where fixed distance metrics may be unstable or poorly aligned with survival outcomes.
\section{Proposed Method}
\label{proposed_methods}
\begin{figure}[t]
    \centering
    \resizebox{\textwidth}{!}{
\begin{tikzpicture}[
  font=\sffamily,
  >={Stealth[length=3.4mm,width=2.8mm,round]},
  flow/.style={arrowcol,line width=1.2pt,->,line cap=round},
  dflow/.style={querypurple,line width=1.2pt,dash pattern=on 3pt off 2.4pt,->,line cap=round},
  panel/.style={rounded corners=7pt,draw=panelstroke,line width=1pt,fill=panelfill},
  title/.style={font=\sffamily\bfseries\large},
  eqbox/.style={rounded corners=4pt,inner sep=6pt,align=center,line width=0.9pt},
]
\begin{scope}[shift={(0,0)}]
  \node[panel,minimum width=2.7cm,minimum height=6.9cm] (pop) at (0,0) {};
  \foreach \x/\y/\c in {
     -0.75/2.55/peoplegray, 0.0/2.65/peopleblue, 0.78/2.5/peoplegray,
     -0.55/1.9/peopleblue, 0.25/1.85/peopleblue, 0.9/1.7/peoplegray,
     -0.85/1.25/peopleblue, -0.05/1.15/peoplegray, 0.7/1.1/peopleblue,
     -0.6/0.5/peoplegray, 0.55/0.45/peopleblue,
     -0.85/-0.2/peopleblue, 0.85/-0.35/peoplegray,
     -0.3/-0.9/peoplegray, 0.6/-1.05/peopleblue,
     -0.8/-1.6/peopleblue, 0.1/-1.7/peoplegray, 0.85/-1.65/peopleblue,
     -0.5/-2.3/peoplegray, 0.45/-2.35/peoplegray, -0.95/-2.4/peopleblue,
     0.2/-2.85/peopleblue, -0.7/-2.9/peoplegray}
     {\pic[fill=\c,scale=1.05] at (\x,\y) {person};}
  \node[circle,draw=querypurple,line width=1.1pt,minimum size=0.72cm,fill=iconpurplebg] (qpop) at (-0.05,-4.55) {};
  \pic[fill=querypurple,scale=1.18] at (-0.05,-4.55) {person};
\end{scope}
\begin{scope}[shift={(5.4,3.6)}]
  \node[title] at (0,2.25) {(i) Survival profiles};
  \node[panel,minimum width=4.5cm,minimum height=4.0cm] (base) at (0,0) {};
  \foreach \y/\bg/\fg in {1.45/iconbluebg/learnblue, 0.5/icongreenbg/learngreen,
                          -0.45/iconorangebg/learnorange, -1.4/iconpurplebg/learnpurple}
     {\node[rounded corners=4pt,fill=\bg,minimum size=0.95cm] at (-1.45,\y) {};}
  \draw[learnblue,line width=1pt] (-1.77,1.22) -- (-1.40,1.5) -- (-1.13,1.66);
  \foreach \p in {(-1.77,1.22),(-1.40,1.5),(-1.13,1.66)} \fill[learnblue] \p circle (0.055);
  \draw[learngreen,line width=1pt]
    (-1.45,0.78)--(-1.73,0.4) (-1.45,0.78)--(-1.17,0.4)
    (-1.73,0.4)--(-1.85,0.18) (-1.73,0.4)--(-1.61,0.18)
    (-1.17,0.4)--(-1.05,0.18) (-1.17,0.4)--(-1.29,0.18);
  \foreach \p in {(-1.45,0.78),(-1.73,0.4),(-1.17,0.4),(-1.85,0.18),(-1.61,0.18),(-1.29,0.18),(-1.05,0.18)}
    \fill[learngreen] \p circle (0.052);
  \foreach \p in {(-1.77,-0.18),(-1.77,-0.7)} \fill[learnorange] \p circle (0.052);
  \foreach \p in {(-1.45,-0.05),(-1.45,-0.45),(-1.45,-0.85)} \fill[learnorange] \p circle (0.052);
  \fill[learnorange] (-1.13,-0.45) circle (0.052);
  \foreach \a in {(-1.77,-0.18),(-1.77,-0.7)}
    \foreach \b in {(-1.45,-0.05),(-1.45,-0.45),(-1.45,-0.85)}
      \draw[learnorange,line width=0.5pt] \a -- \b;
  \foreach \b in {(-1.45,-0.05),(-1.45,-0.45),(-1.45,-0.85)}
      \draw[learnorange,line width=0.5pt] \b -- (-1.13,-0.45);
  \draw[learnpurple,line width=1pt]
    (-1.85,-1.08) .. controls (-1.55,-1.12) and (-1.30,-1.5) .. (-1.07,-1.72);
  \foreach \p in {(-1.75,-1.32),(-1.55,-1.2),(-1.40,-1.5),(-1.21,-1.38),(-1.30,-1.66)}
    \fill[learnpurple] \p circle (0.045);
  \foreach \y/\fg/\sa/\sb/\sc in {
      1.45/learnblue/0.30/0.14/-0.02,
      0.5/learngreen/0.28/0.08/-0.10,
      -0.45/learnorange/0.32/0.10/-0.06,
      -1.4/learnpurple/0.26/0.06/-0.12}{
     \begin{scope}[shift={(1.00,\y)}]
        \draw[arrowcol,line width=0.6pt,->] (-0.55,-0.35) -- (0.85,-0.35);
        \draw[arrowcol,line width=0.6pt,->] (-0.55,-0.35) -- (-0.55,0.42);
        \draw[baseline,line width=0.5pt,dash pattern=on 1.6pt off 1.4pt] (-0.55,-0.28) -- (0.8,-0.28);
        \draw[\fg,line width=1.1pt]
           (-0.5,\sa)--(-0.28,\sa)--(-0.28,\sb)--(-0.04,\sb)--(-0.04,\sc)
           --(0.2,\sc)--(0.2,-0.18)--(0.46,-0.18)--(0.46,-0.24)--(0.78,-0.24);
     \end{scope}
  }
\end{scope}
\begin{scope}[shift={(11,3.55)}]
  \node[title] at (0,2.05) {(ii) $m$-out-of-$M$};
  \def\colsep{0.52}
  \foreach \row/\fg/\bg/\yy/\pat in {
      0/learnblue/iconbluebg/1.45/{1,1,0,1,0},
      1/learngreen/icongreenbg/0.72/{1,1,1,0,1},
      2/learnorange/iconorangebg/-0.01/{1,1,1,0,1},
      3/learnpurple/iconpurplebg/-0.74/{1,1,0,1,0}}{
    \node[rounded corners=8pt,draw=panelstroke,line width=0.9pt,fill=panelfill,
          minimum width=3.25cm,minimum height=0.62cm] at (0,\yy) {};
    \foreach \i [count=\j from 0] in \pat {
       \pgfmathsetmacro\xx{(\j-2)*\colsep}
       \ifnum\i=1
          \node[circle,fill=\fg,minimum size=0.40cm] at (\xx,\yy) {};
          \pic at (\xx,\yy) {checkmark};
       \else
          \node[circle,draw=panelstroke,line width=0.9pt,fill=white,minimum size=0.40cm] at (\xx,\yy) {};
       \fi
    }
  }
  \node[rounded corners=6pt,fill=iconorangebg!40,minimum width=3.25cm,minimum height=0.6cm] (mofm) at (0.3,-1.42) {};
  \foreach \j/\c in {0/learngreen,1/learngreen,2/peoplegray,3/peoplegray,4/peoplegray}{
     \pgfmathsetmacro\xx{(\j-2)*\colsep}
     \draw[panelstroke,line width=0.6pt,dash pattern=on 1.6pt off 1.4pt] (\xx,-0.95) -- (\xx,-1.2);
     \pic[fill=\c,scale=1.12] at (\xx,-1.42) {person};
  }
\node[inner sep=0pt,fit={(-1.8,1.8) (1.8,-1.75)}] (mofmpanel) {};
\end{scope}
\begin{scope}[shift={(7.6,-3.5)}]
  \node[title] at (0,2.2) {ANN weights};
  \node[panel,minimum width=9.0cm,minimum height=3.9cm] (ann) at (0,0) {};
  \begin{scope}
     \clip[rounded corners=7pt] (-4.5,-1.95) rectangle (4.5,1.95);
     \fill[iconpurplebg!55] (-4.5,-1.95) rectangle (4.5,-0.75);
  \end{scope}
  \foreach \cx in {-3.6,-2.4,-1.2,0,1.2,2.4,3.6}{
     \pic[scale=0.9] at (\cx,0.55) {anntree};
  }
  \draw[learnpurple,line width=0.6pt,dash pattern=on 1.8pt off 1.6pt]
    (-3.6,0.4) .. controls (-3.2,-0.15) and (-2.7,-0.65) .. (-2.45,-1.05);
  \draw[learnpurple,line width=0.6pt,dash pattern=on 1.8pt off 1.6pt]
    (-2.4,0.4) .. controls (-2.1,-0.15) and (-1.8,-0.65) .. (-1.75,-1.05);
  \draw[learnpurple,line width=0.6pt,dash pattern=on 1.8pt off 1.6pt]
    (-1.2,0.4) .. controls (-1.0,-0.20) and (-0.95,-0.70) .. (-0.95,-1.05);
  \draw[learnpurple,line width=0.6pt,dash pattern=on 1.8pt off 1.6pt]
    (0,0.4)    .. controls (0,-0.20) and (0,-0.72) .. (0,-1.05);
  \draw[learnpurple,line width=0.6pt,dash pattern=on 1.8pt off 1.6pt]
    (1.2,0.4)  .. controls (1.0,-0.20) and (0.95,-0.70) .. (0.95,-1.05);
  \draw[learnpurple,line width=0.6pt,dash pattern=on 1.8pt off 1.6pt]
    (2.4,0.4)  .. controls (2.1,-0.15) and (1.8,-0.65) .. (1.75,-1.05);
  \draw[learnpurple,line width=0.6pt,dash pattern=on 1.8pt off 1.6pt]
    (3.6,0.4)  .. controls (3.2,-0.15) and (2.7,-0.65) .. (2.45,-1.05);
  \foreach \x/\c in {
    -3.9/peoplegray,
    -3.25/learnpurple!20,
    -2.6/learnpurple!35,
    -1.95/learnpurple!50,
    -1.3/learnpurple!70,
    -0.65/querypurple,
     0.0/querypurple,
     0.65/querypurple,
     1.3/learnpurple!70,
     1.95/learnpurple!50,
     2.6/learnpurple!35,
     3.25/learnpurple!20,
     3.9/peoplegray}{
     \pic[fill=\c,scale=1.14] at (\x,-1.45) {person};
  }
\end{scope}
\begin{scope}[shift={(17.8,0.1)}]
  \node[title,querypurple] at (0,2.55) {(iii) KESurv weights};
  \node[circle,draw=panelstroke,line width=1pt,minimum size=4.0cm,fill=panelfill] (kes) at (0,0) {};
  \foreach \r/\op in {1.65/12,1.25/20,0.85/32,0.5/55}{
     \fill[querypurple,opacity=\op*0.01] (0,0) circle (\r);
  }
  \pic[fill=querypurple,scale=1.5] at (0,0) {person};
  \foreach \x/\y/\c in {
      -0.85/0.7/learngreen, 0.5/0.95/learngreen, 1.05/0.25/learngreen,
      0.75/-0.7/learngreen, -0.55/-0.85/learngreen, -1.1/-0.05/learngreen,
      0.0/1.35/peoplegray, -1.35/0.75/peoplegray, 1.4/-0.55/peoplegray,
      0.15/-1.35/peoplegray, -0.95/-1.1/peoplegray, 1.3/0.9/peoplegray}{
     \pic[fill=\c,scale=1.05] at (\x,\y) {person};
  }
\end{scope}
\begin{scope}[shift={(24,0.1)}]
  \node[title,align=center] at (0,2.4) {Patient-specific\\survival};
  \node[panel,minimum width=3.3cm,minimum height=3.4cm] (pssc) at (0,0) {};
  \begin{scope}[shift={(-1.05,-1.2)}]
     \draw[arrowcol,line width=0.8pt,->] (0,0) -- (2.25,0);
     \draw[arrowcol,line width=0.8pt,->] (0,0) -- (0,2.45);
     \node[font=\footnotesize] at (-0.28,2.25) {$1.0$};
     \node[font=\footnotesize] at (-0.2,0.05) {$0$};
     \node[font=\footnotesize\itshape] at (2.2,-0.25) {$t$};
     \draw[baseline,line width=0.7pt,dash pattern=on 2pt off 1.8pt] (0,0.45) -- (2.15,0.45);

     \draw[querypurple,line width=1.3pt]
        (0.0,2.2)--(0.4,2.2)--(0.4,1.82)--(0.75,1.82)--(0.75,1.6)--(1.05,1.6)
        --(1.05,1.18)--(1.35,1.18)--(1.35,0.82)--(1.7,0.82)--(1.7,0.5)--(2.1,0.5)--(2.1,0.42);
  \end{scope}
\end{scope}
\node[eqbox,draw=learnblue,fill=iconbluebg!55,font=\scriptsize] (e1) at (5.4,0.4)
  {$\begin{gathered}
    D_{\ell}^{\mathrm{ABS}}(x,X_j)
    =\dfrac{1}{t_{\max}}\int_{0}^{t_{\max}}
    \\[1pt]
    \big|\widehat{S}_{\ell}(s\mid x)
      -\widehat{S}_{\ell}(s\mid X_j)\big|\,ds
  \end{gathered}$};

\node[eqbox,draw=learngreen,fill=icongreenbg!60,font=\scriptsize] (e2) at (11,0.4)
  {$\begin{gathered}
    I_{\epsilon,m}(x,X_j) =
    \\[1pt]
    \ind\!\left\{
      \textstyle\sum_{\ell=1}^{M}
      \ind\{D_{\ell}^{\mathrm{ABS}}(x,X_j)\le\epsilon\}
      \ge m\right\}
  \end{gathered}$};

\node[eqbox,draw=learnorange,fill=iconorangebg!55,font=\scriptsize] (e3) at (7.7,-6.55)
  {$\widehat{K}_{\mathrm{ANN}}(x,X_j)
    =\dfrac{1}{|\mathscr{T}|}
      \textstyle\sum_{k=1}^{|\mathscr{T}|}a_{k,j}(x)$};

\node[eqbox,draw=querypurple,line width=1.2pt,fill=iconpurplebg,font=\scriptsize] (e4) at (17.7,-3.3)
  {$\begin{gathered}
    \widetilde{K}^{\mathrm{KES}}_{\epsilon,m}(x,X_j)
    =\widehat{K}_{\mathrm{ANN}}(x,X_j)I_{\epsilon,m}(x,X_j)
    \\[2pt]
    Z_{\epsilon,m}(x)
    =\textstyle\sum_{q=1}^{n}
      \widetilde{K}^{\mathrm{KES}}_{\epsilon,m}(x,X_q)
    \\[2pt]
    \widehat{K}^{\mathrm{KES}}_{\epsilon,m}(x,X_j)
    =\begin{cases}
      \widetilde{K}^{\mathrm{KES}}_{\epsilon,m}(x,X_j)/Z_{\epsilon,m}(x),
      & Z_{\epsilon,m}(x)>0,\\
      \widehat{K}_{\mathrm{ANN}}(x,X_j),
      & Z_{\epsilon,m}(x)=0.
    \end{cases}
  \end{gathered}$};

\node[eqbox,draw=querypurple,fill=iconpurplebg!55,font=\scriptsize] (e5) at (24.0,-3.3)
  {$\begin{gathered}
    \widehat{S}_{\mathrm{KESurv}}(t\mid x;\epsilon,m) =
    \\[2pt]
    \displaystyle\prod_{\substack{
      s\in\mathcal{Y}_{\epsilon,m}^{\mathrm{KES}}(x)\\s\le t}}
    \left(1-\dfrac{d^{\mathrm{KES}}_{\epsilon,m}(s\mid x)}
      {r^{\mathrm{KES}}_{\epsilon,m}(s\mid x)}\right)
  \end{gathered}$};

\draw[flow]
  (pop.east)
  .. controls +(1.2,0.6) and +(-1.0,-0.2) ..
  (base.west);
\draw[dflow]
  (qpop.south)
  .. controls +(1.2,-1.0) and +(-1.5,0.2) ..
  (ann.west);
\draw[flow] (base.south) -- (e1.north);
\draw[flow] (e1.east) -- (e2.west);
\draw[flow] (e2.north) -- (mofmpanel.south);
\draw[flow]
  (pop.south east)
  .. controls +(0.9,-0.4) and +(-0.7,0.9) ..
  (ann.north west);
\draw[flow]
  (mofmpanel.east)
  .. controls +(1.4,0.0) and +(-1.2,1.0) ..
  (kes.north west);
\draw[flow]
  (ann.north east)
  .. controls +(1.2,0.8) and +(-1.2,-0.4) ..
  (kes.south west);
\draw[flow]
  (kes.east)
  --
  (pssc.west);
\end{tikzpicture}
}
    \caption{
    Overview of the proposed KESurv framework.
    Base survival learners compare subjects in survival-curve space, while the survival forest provides adaptive nearest-neighbor weights.
    }
    \label{fig:kesurv_overview}
\end{figure}
The objective of the proposed method, summarized in Figure~\ref{fig:kesurv_overview}, is to estimate a patient-specific survival curve for a query subject with covariates $x$.
The background section introduced two important ingredients: weighted Kaplan--Meier estimation and forest-induced adaptive nearest-neighbor weights.
The adaptive nearest-neighbor estimator uses the terminal-node structure of a survival forest to define a data-dependent notion of proximity.
However, a single adaptive kernel may still reflect only one representation of similarity.
In high-dimensional biomedical data, different survival learners may capture different aspects of patient risk, such as nonlinear feature interactions, proportional hazards effects, or local neighborhood structure.
This motivates a method that uses multiple survival learners to identify training observations whose predicted survival behavior is consistently similar to that of the query subject.
KESurv follows this idea.
Instead of averaging the survival predictions of several models, it uses the predictions of several base survival learners to define a more reliable neighborhood around the query point.
This is related in spirit to the Combined Regression Aggregation framework of \citet{biau2016cobra}, where base machines are used as a collective indicator of proximity rather than being combined through a standard linear ensemble.
In our survival setting, two subjects are considered close if their predicted survival curves are similar under a sufficient number of base survival learners.
The resulting consensus neighborhood is then combined with a survival-forest-induced adaptive kernel and used inside a weighted Kaplan--Meier estimator.
Let $\mathcal{G}=\{g_1,\ldots,g_M\} $ denote a collection of $M$ base survival learners. After training on the observed data $\mathcal{D}_n=\{(X_i,Y_i,\delta_i)\}_{i=1}^{n}$, the $\ell$-th learner $g_\ell$ produces an estimated survival curve
$
    \widehat{S}_{\ell}(t\mid x),
    \forall
    \ell=1,\ldots,M,
$
for a subject with covariates $x$.
The only requirement on a base learner is that it must be able to output survival probabilities over time.
Thus, the collection $\mathcal{G}$ may contain parametric, semiparametric, nonparametric, or machine-learning-based survival estimators.
The learners are trained independently, and their role in KESurv is to define survival-curve-based proximity rather than to directly form the final survival estimate.
\subsection{Survival-Curve Dissimilarity}
To use the base learners as proximity machines, we first need a way to compare the predicted survival curves of two subjects.
For two generic covariate vectors $u$ and $v$, the question is not whether they are close in the original feature space, but whether their predicted survival trajectories are close.
This distinction is important in high-dimensional clinical data, where raw covariate distance may be poorly aligned with survival behavior.
For the $\ell$-th learner, we define a survival-curve dissimilarity by measuring the normalized area between the two predicted survival curves.
This is motivated by ABS-type comparisons of survival functions, which quantify separation between survival curves through integrated absolute differences over time \citep{huang2022nonparametric}.
For a maximum follow-up time $t_{\max}$, the continuous form is
\begin{equation}
\label{eq:abs_distance_continuous}
D_{\ell}^{\mathrm{ABS}}(u,v)
=
t_{\max}^{-1}
\int_{0}^{t_{\max}}
\left|
\widehat{S}_{\ell}(s\mid u)
-
\widehat{S}_{\ell}(s\mid v)
\right|
\,ds
\end{equation}
Equation~\ref{eq:abs_distance_continuous} compares the entire survival trajectory predicted by learner $g_\ell$ for the two subjects $u$ and $v$.
The absolute value ensures that positive and negative deviations do not cancel out.
The factor $t_{\max}^{-1}$ normalizes the area by the length of the time interval, making the distance easier to compare across datasets with different follow-up ranges.
Since survival probabilities lie between 0 and 1, smaller values of $D_{\ell}^{\mathrm{ABS}}(u,v)$ indicate that the two subjects have similar predicted survival behavior under learner $g_\ell$.
In practice, survival curves are evaluated on a finite grid of time points.
Let $ 0=\tau_0<\tau_1<\cdots<\tau_P=t_{\max}$
denote the time grid used to evaluate the predicted survival curves.
Using a piecewise approximation, we compute the empirical survival-curve dissimilarity as
\begin{equation}
\label{eq:abs_distance_discrete}
D_{\ell}^{\mathrm{ABS}}(u,v)
=
\tau_P^{-1}
\sum_{b=1}^{P}
\left|
\widehat{S}_{\ell}(\tau_{b-1}\mid u)
-
\widehat{S}_{\ell}(\tau_{b-1}\mid v)
\right|
(\tau_b-\tau_{b-1})
\end{equation}
The discrete form in Eq.~\ref{eq:abs_distance_discrete} is the version used in the implementation.
The left-endpoint approximation is natural for Kaplan--Meier-type survival curves because such estimates are step functions.
The grid can be chosen as the set of observed event times, a fixed set of evaluation times, or any time grid shared across all base learners.
For consistency, the same grid is used when comparing all subjects under a given learner.
\subsection{Agreement-Based Survival Neighborhood}
The distance $D_{\ell}^{\mathrm{ABS}}(x,X_j)$ measures closeness between the query point $x$ and training observation $X_j$ according to learner $g_\ell$.
However, relying on only one learner may make the selected neighborhood sensitive to model-specific errors.
To improve stability, KESurv uses agreement across multiple survival learners.
The key idea is that a training observation should be considered a reliable neighbor of $x$ if several learners predict a survival curve for $X_j$ that is close to the predicted survival curve for $x$.
Importantly, KESurv does not require all $M$ learners to agree.
Instead, we introduce an agreement parameter $m\in\{1,\ldots,M\},$
where $m$ denotes the minimum number of survival learners that must consider a training observation close to the query point.
This makes the method flexible.
When $m=1$, the selector is permissive because agreement from any one learner is sufficient.
When $m=M$, the selector is strict because all learners must agree.
Intermediate values of $m$ provide a balance between these two extremes.
For each learner $\ell$ and threshold $\epsilon>0$, define the learner-specific survival neighborhood $
\mathcal{N}_{\epsilon,\ell}(x)
=
\{j:
D_{\ell}^{\mathrm{ABS}}(x,X_j)\leq \epsilon\} $, it contains the indices of training observations whose predicted survival curves are within radius $\epsilon$ of the query subject under learner $g_\ell$.
The parameter $\epsilon$ controls the tolerance for survival-curve dissimilarity.
A small value of $\epsilon$ keeps only subjects with highly similar predicted survival trajectories, while a larger value allows more subjects to be considered neighbors. 
The following selector uses the COBRA-style consensus structure of retaining an observation when sufficiently many base learners predict similar outputs \citep{biau2016cobra}. A survival-curve version of this construction appears in Area-norm COBRA \citep{goswami2023area}. Here, we use the selector to filter the forest-induced adaptive neighborhood, retaining its adaptive weights for survival aggregation. Using these learner-specific neighborhoods, we define the $m$-out-of-$M$ consensus selector
\begin{equation}
\label{eq:consensus_selector}
I_{\epsilon,m}(x,X_j)
=
\ind\left\{
\sum_{\ell=1}^{M}
\ind\{j\in \mathcal{N}_{\epsilon,\ell}(x)\}
\geq m
\right\},
\end{equation}
\noindent Equivalently, Eq.~\ref{eq:consensus_selector} can be written directly in terms of survival-curve distances as
$
I_{\epsilon,m}(x,X_j)
=
\ind\left\{
\sum_{\ell=1}^{M}
\ind\{D_{\ell}^{\mathrm{ABS}}(x,X_j)\leq \epsilon\}
\geq m
\right\}.
$
The selector equals one if at least $m$ of the $M$ survival learners regard $X_j$ as close to $x$.
Otherwise, it equals zero.
Thus, $I_{\epsilon,m}(x,X_j)$ acts as a survival-curve-based filter that keeps only those training observations supported by a sufficient number of base learners.
\subsection{KESurv Kernel}
The selector in Eq.~\ref{eq:consensus_selector} defines which training observations are retained, but it does not assign smooth weights among the selected observations.
To preserve the adaptive weighting structure introduced in the background section, we combine the selector with the forest-induced adaptive nearest-neighbor kernel.
Let $
    \widehat{K}_{\mathrm{ANN}}(x,X_j)$
denote the adaptive nearest-neighbor kernel obtained from the survival-forest ensemble, as defined in Eq.~\ref{eq:ann_adaptive_kernel}.
This kernel acts as a smoother because it gives graded weights according to forest-induced proximity.
The selector $I_{\epsilon,m}(x,X_j)$ acts as a gate because it removes observations that do not receive enough support from the base survival learners. We first define the unnormalized KESurv kernel:
\begin{equation}
\label{eq:proposed_kernel_unnormalized}
\widetilde{K}_{\epsilon,m}^{\mathrm{KES}}(x,X_j)
=
\underbrace{
\widehat{K}_{\mathrm{ANN}}(x,X_j)
}_{\substack{\text{smoother:}\\\text{forest proximity}}}
\,
\underbrace{
I_{\epsilon,m}(x,X_j)
}_{\substack{\text{selector:}\\m\text{-out-of-}M\text{ agreement}}}
\end{equation}
Equation~\ref{eq:proposed_kernel_unnormalized} is the central construction of KESurv.
The first component, $\widehat{K}_{\mathrm{ANN}}(x,X_j)$, measures how close $X_j$ is to $x$ according to the adaptive forest neighborhood.
The second component, $I_{\epsilon,m}(x,X_j)$, checks whether $X_j$ is also close to $x$ in predicted survival-curve space for at least $m$ base learners.
Therefore, a training observation receives positive weight only if it passes the survival-curve agreement rule.
Among the retained observations, the magnitude of the weight is still determined by the adaptive nearest-neighbor kernel.
Because the selector may set some weights to zero, the unnormalized weights in Eq.~\ref{eq:proposed_kernel_unnormalized} may not sum to one.
Although the weighted Kaplan--Meier ratio is invariant to a common scaling of all weights, normalizing the kernel makes the interpretation cleaner and avoids ambiguity.
We therefore define the normalizing mass
\begin{equation}
\label{eq:proposed_kernel_mass}
 Z_{\epsilon,m}(x)
=
\sum_{q=1}^{n}
\widetilde{K}_{\epsilon,m}^{\mathrm{KES}}(x,X_q),
\end{equation}
The normalized KESurv kernel is then defined as
\begin{equation}
\label{eq:proposed_kernel}
\small
\boxed{
\widehat{K}_{\epsilon,m}^{\mathrm{KES}}(x,X_j)
=
\begin{cases}
\widetilde{K}_{\epsilon,m}^{\mathrm{KES}}(x,X_j)/Z_{\epsilon,m}(x),
& Z_{\epsilon,m}(x)>0,\\[1mm]
\widehat{K}_{\mathrm{ANN}}(x,X_j),
& Z_{\epsilon,m}(x)=0.
\end{cases}
}
\end{equation}
The first case in Eq.~\ref{eq:proposed_kernel} is used when the agreement-based selector retains at least one training observation.
In that case, the selected adaptive weights are renormalized and form a valid kernel.
The second case is a fallback for the situation in which the rule is too restrictive and no observation satisfies the $m$-out-of-$M$ agreement condition.
Rather than producing an undefined estimate, KESurv reverts to the original adaptive nearest-neighbor kernel.
This fallback makes the estimator stable for small values of $\epsilon$ or large values of $m$.
\subsection{KESurv Survival Estimator}
Once the KESurv kernel has been defined, the survival curve is estimated using the same weighted Kaplan--Meier principle used for the kernel and adaptive nearest-neighbor estimators.
The only change is the choice of weights.
Instead of using the raw kernel weights or the original adaptive nearest-neighbor weights, KESurv uses the normalized weights $\widehat{K}_{\epsilon,m}^{\mathrm{KES}}(x,X_j)$. Let us first define the set of observed event times that receive positive KESurv weight:
\begin{equation}
\label{eq:proposed_event_times}
\small
\mathcal{Y}_{\epsilon,m}^{\mathrm{KES}}(x)
=
\{Y_j:
\delta_j=1,\;
\widehat{K}_{\epsilon,m}^{\mathrm{KES}}(x,X_j)>0\},
\end{equation}
The set $\mathcal{Y}_{\epsilon,m}^{\mathrm{KES}}(x)$ contains the event times associated with observations receiving positive final KESurv weight. When $Z_{\epsilon,m}(x)>0$ these are the selected observations; when $Z_{\epsilon,m}(x)=0$ they are determined by the fallback ANN kernel.
These are the times at which the KESurv estimator updates the survival probability for the query subject. For each time $t$, define the KESurv weighted death count and risk count as follows:
\begin{equation}
\label{eq:proposed_death_count}
d_{\epsilon,m}^{\mathrm{KES}}(t\mid x)
=
\sum_{j=1}^{n}
\widehat{K}_{\epsilon,m}^{\mathrm{KES}}(x,X_j)
\delta_j\ind\{Y_j=t\},
\end{equation}
\begin{equation}
\label{eq:proposed_risk_count}
\small
r_{\epsilon,m}^{\mathrm{KES}}(t\mid x)
=
\sum_{j=1}^{n}
\widehat{K}_{\epsilon,m}^{\mathrm{KES}}(x,X_j)
\ind\{Y_j\geq t\}
\end{equation}
Equation~\ref{eq:proposed_death_count} gives the KESurv weighted number of observed events at time $t$.
Only observations with $\delta_j=1$ and $Y_j=t$ contribute to this quantity.
Equation~\ref{eq:proposed_risk_count} gives the KESurv weighted number of observations still at risk just before time $t$.
The ratio
$
    d_{\epsilon,m}^{\mathrm{KES}}(t\mid x)
    /
    r_{\epsilon,m}^{\mathrm{KES}}(t\mid x)
$
is the local hazard increment induced by the proposed kernel. The final KESurv estimator is therefore
\begin{equation}
\label{eq:proposed_survival_estimator}
\widehat{S}_{\mathrm{KESurv}}(t\mid x;\epsilon,m)
=
\prod_{s\in\mathcal{Y}_{\epsilon,m}^{\mathrm{KES}}(x),\,0<s\leq t}
\left(
1-
\frac{d_{\epsilon,m}^{\mathrm{KES}}(s\mid x)
}{r_{\epsilon,m}^{\mathrm{KES}}(s\mid x)}
\right)
\end{equation}
Equation~\ref{eq:proposed_survival_estimator} shows that KESurv remains a weighted Kaplan--Meier estimator.
The novelty lies in how the weights are constructed.
The adaptive nearest-neighbor kernel contributes a smooth, forest-induced measure of proximity, while the $m$-out-of-$M$ selector contributes a survival-curve-based agreement condition.
Thus, the proposed method does not directly average the base survival predictions.
Instead, it uses them to identify a more reliable local neighborhood and then estimates the survival curve nonparametrically within that neighborhood.
Throughout, the superscript $\mathrm{KES}$ denotes the KESurv kernel weighting scheme, whereas $\mathrm{KESurv}$ denotes the resulting final survival estimator.
KESurv has two main hyperparameters: the distance threshold $\epsilon$ and the agreement level $m$. The threshold $\epsilon$ controls how close two predicted survival curves must be for a learner to consider them similar.
A smaller $\epsilon$ yields a more selective neighborhood, while a larger $\epsilon$ produces a broader neighborhood. The agreement parameter $m$ controls how many learners must support this similarity. A smaller value of $m$ is more permissive and may reduce variance by selecting more observations. A larger value of $m$ is more stringent and may reduce bias by retaining only observations with stronger cross-model agreement. These two parameters play complementary roles.
Thus, $\epsilon$ controls the radius of the survival-curve neighborhoods, while $m$ controls the level of consensus required across the ensemble.
The proposed estimator contains the adaptive nearest-neighbor survival estimator as a limiting special case. To see this, consider the limit $\epsilon\to\infty$. For any learner $\ell$ and any training observation $X_j$, the condition $D_{\ell}^{\mathrm{ABS}}(x,X_j)\leq \epsilon$
eventually holds. Therefore, for any fixed agreement level $m\in\{1,\ldots,M\}$,
$
    I_{\epsilon,m}(x,X_j)\to 1
    \qquad \forall j.
$
Consequently,
$
    \widehat{K}_{\epsilon,m}^{\mathrm{KES}}(x,X_j)
    \to
    \widehat{K}_{\mathrm{ANN}}(x,X_j),
$
and the KESurv survival estimator reduces to the adaptive nearest-neighbor survival estimator. For finite $\epsilon$, the method instead filters the adaptive neighborhood using survival-curve agreement across the base learners; KESurv can therefore be viewed as a survival-curve-guided refinement of adaptive nearest-neighbor estimation that recovers the latter as a special case.
Theoretical properties of the KESurv weights and a uniform consistency result under standard local-survival assumptions are provided in \appref{app:theory}.
\section{Experiments} \label{experiments}
We evaluate KESurv on four benchmark medical survival datasets to assess its ability to estimate patient-specific survival curves under different clinical settings.
The experiments are designed to compare KESurv against representative semiparametric, nonparametric, tree-based, and ensemble survival models using both discrimination and calibration-oriented metrics.
Unless otherwise stated, model selection is performed on a validation set, and final results are reported on held-out test data.
\subsection{Datasets}
To evaluate the performance of the proposed model, we considered four benchmark medical survival datasets: Flchain, Metabric, Support, and RoGBSG.
The Flchain dataset is derived from a study examining the association between serum free light chains and mortality, and contains demographic and clinical variables relevant to hematological disorders.
The Metabric dataset contains molecular and clinical information for breast cancer patients, including genetic profiles and survival outcomes, and has been widely used for studying molecular subtypes and treatment response.
The RoGBSG dataset combines the Rotterdam tumor bank cohort and the German Breast Cancer Study Group cohort.
Because the two cohorts share several common clinical variables, this combination is well suited to external validation.
This combined setting has been used in several recent survival-analysis studies.
Finally, the Support dataset consists of seriously ill hospitalized adults and includes information on diagnoses, treatment preferences, interventions, and survival outcomes.
Together, these datasets provide diverse evaluation settings across hematological disorders, breast cancer, and critical-care prognosis.
The main characteristics of the datasets are summarized in Table~\ref{tab:dataset_description}.
\begin{table}[htbp]
\centering
\renewcommand{\arraystretch}{1.15}
\setlength{\tabcolsep}{4pt}
\begin{tabular}{@{}lrrrcccccc@{}}
\toprule
\textbf{Dataset}
& $\boldsymbol{n}$
& $\boldsymbol{p}$
& \textbf{Cens. (\%)}
& \multicolumn{5}{c}{\textbf{Five-number summary of } $\boldsymbol{Y}$}
& \textbf{Ref.} \\
\cmidrule(lr){5-9}
& & &
& \textbf{Min}
& $\boldsymbol{Q_1}$
& \textbf{Median}
& $\boldsymbol{Q_3}$
& \textbf{Max}
& \\
\midrule
Flchain
& 6524 & 8  & 69.93
& 0.00 & 2907.50 & 4303.00 & 4771.00 & 5166.00
& \textsuperscript{a} \\
Metabric
& 1904 & 9  & 42.07
& 0.00 & 60.82 & 114.90 & 184.47 & 355.20
& \textsuperscript{b} \\
Support
& 8873 & 14 & 31.97
& 3.00 & 26.00 & 231.00 & 763.00 & 2029.00
& \textsuperscript{c} \\
RoGBSG
& 2232 & 7  & 43.23
& 0.26 & 19.12 & 40.23 & 71.10 & 87.36
& \textsuperscript{d} \\
\bottomrule
\end{tabular}
\caption{Dataset characteristics. Cens. denotes the percentage of censored observations $(\delta_i=0)$.}
\label{tab:dataset_description}
\vspace{1mm}
\begin{minipage}{\linewidth}
\footnotesize
\textsuperscript{a} \cite{dispenzieri2012use}.
\textsuperscript{b} \cite{rueda2019dynamics}.
\textsuperscript{c} \cite{connors1995controlled}.
\textsuperscript{d} Rotterdam tumor bank cohort \cite{foekens2000urokinase} and German Breast Cancer Study Group cohort \cite{schumacher1994randomized}; this combined setting has also been used in recent survival-analysis studies \cite{katzman2018deepsurv,kvamme2019time,chen2020deep,zhong2021deep,chen2024survival}.
\end{minipage}
\end{table}
\subsection{Experimental Setup} \label{sec:experimental_setup}
For each model and dataset, we repeated the experiment over 10 independent iterations.
In each iteration, the dataset was divided into training, validation, and test sets using a 70--15--15 split.
The training set was used to fit the models, while the validation set was used to select the best hyperparameter configuration for each method according to the search space reported in~\appref{app:hyperparam}.
After hyperparameter selection, the final performance of each model was evaluated on the held-out test set using the metrics described in \appref{app:metric_subsection}.
Across the 10 iterations, we report the mean and standard deviation of each evaluation metric.
The Rotterdam--GBSG experiment follows a different evaluation protocol.
In this setting, the experiment was not repeated over random splits; instead, the Rotterdam cohort was used as the training set and the GBSG cohort was used as the test set.
All models were trained on the Rotterdam data and evaluated directly on the GBSG data, providing an external validation setting across two independent cohorts.
Additional analyses rank-based statistical tests on the composite validation score (\appref{app:hyperparam}) and an ablation over the survival-curve distance metric (\appref{app:ablation}) are reported in the appendices.
The proposed KESurv model requires a set of base survival models to construct the ensemble-based survival similarity used in the proposed kernel.
In this work, we used four representative models: Cox Lasso (Lasso), Cox Ridge (Ridge), nearest-neighbor survival (NN), and a single survival tree (Tree), each described among the comparison models below.
Together, these models provide diverse survival estimates that KESurv uses to identify samples with similar predicted survival profiles. Within the training cohort, the reported experiments use a no-split construction: each base learner is fitted once on the training cohort, which also supplies the reference observations for survival aggregation. Consequently, stored training profiles are in-sample, whereas held-out query profiles are out-of-sample. Validation outcomes are used only for hyperparameter selection, and test outcomes are not used in fitting, tuning, or neighborhood construction. Theorem~1 in Appendix~\ref{app:theory} concerns the honest/cross-fitted construction under Assumption~3 and does not establish consistency of the reported no-split implementation.
\subsection{Comparison Models}
We compared KESurv with representative survival models from semiparametric, nonparametric, tree-based, and ensemble-based families.
The semiparametric baselines include Cox regression (Cox) and three penalized Cox variants: Lasso, Ridge, and Cox Elastic Net (Net).
These models estimate relative risk under the proportional hazards assumption, with the penalized variants using regularization to improve stability and control model complexity.
As nonparametric and tree-based baselines, we considered NN and Tree.
NN estimates patient-specific survival using locally similar training samples and the Beran estimator, while Tree recursively partitions the covariate space to obtain interpretable risk groups.
We also included Survival Forest (Forest), a tree-ensemble method for right-censored outcomes, and XGBoost, a gradient-boosting method adapted to survival prediction.
Some comparison models produce risk scores rather than direct survival probability estimates.
For these models, we used the Breslow estimator~\citep{lin2007breslow} to obtain survival curves, allowing all methods to be evaluated using the same survival-function-based metrics.
We did not include separate fixed-kernel survival baselines because nearest-neighbor and kernel survival estimators have been extensively studied in prior work~\citep{chen2019nearest}, and our focus is on evaluating whether the proposed ensemble-based survival-profile kernel improves patient-specific survival estimation.
\subsection{Results}
\begin{table*}[htbp]
    \centering
    \resizebox{0.90\textwidth}{!}{
    \begin{tabular}{lccccccccc}
        \toprule
        Dataset
        & Lasso
        & Net
        & Cox
        & Ridge
        & NN
        & Tree
        & XGBoost
        & Forest
        & KESurv \\
        \midrule
        \multirow{2}{*}{Flchain}
        & 0.7838
        & \textbf{0.7840}
        & 0.7835
        & 0.7835
        & 0.7480
        & 0.7672
        & 0.7825
        & 0.7688
        & 0.7814 \\
        & {\tiny($\pm$0.0101)}
        & {\tiny($\pm$0.0101)}
        & {\tiny($\pm$0.0102)}
        & {\tiny($\pm$0.0102)}
        & {\tiny($\pm$0.0114)}
        & {\tiny($\pm$0.0151)}
        & {\tiny($\pm$0.0104)}
        & {\tiny($\pm$0.0128)}
        & {\tiny($\pm$0.0128)} \\
        \addlinespace[0.35em]
        \multirow{2}{*}{Metabric}
        & 0.6281
        & 0.6283
        & 0.6276
        & 0.6278
        & 0.6378
        & 0.6208
        & 0.6407
        & 0.6490
        & \textbf{0.6599} \\
        & {\tiny($\pm$0.0198)}
        & {\tiny($\pm$0.0190)}
        & {\tiny($\pm$0.0196)}
        & {\tiny($\pm$0.0196)}
        & {\tiny($\pm$0.0188)}
        & {\tiny($\pm$0.0238)}
        & {\tiny($\pm$0.0145)}
        & {\tiny($\pm$0.0226)}
        & {\tiny($\pm$0.0172)} \\
        \addlinespace[0.35em]
        \multirow{2}{*}{RoGBSG}
        & 0.6222
        & 0.6192
        & 0.6172
        & 0.6172
        & 0.6526
        & 0.6423
        & 0.6219
        & 0.6501
        & \textbf{0.6560} \\
        & {\tiny($\pm$0.0000)}
        & {\tiny($\pm$0.0000)}
        & {\tiny($\pm$0.0000)}
        & {\tiny($\pm$0.0000)}
        & {\tiny($\pm$0.0000)}
        & {\tiny($\pm$0.0000)}
        & {\tiny($\pm$0.0000)}
        & {\tiny($\pm$0.0000)}
        & {\tiny($\pm$0.0000)} \\
        \addlinespace[0.35em]
        \multirow{2}{*}{Support}
        & 0.5969
        & 0.5966
        & 0.5965
        & 0.5965
        & 0.5993
        & 0.5974
        & 0.5982
        & 0.6221
        & \textbf{0.6425} \\
        & {\tiny($\pm$0.0080)}
        & {\tiny($\pm$0.0082)}
        & {\tiny($\pm$0.0082)}
        & {\tiny($\pm$0.0082)}
        & {\tiny($\pm$0.0081)}
        & {\tiny($\pm$0.0102)}
        & {\tiny($\pm$0.0086)}
        & {\tiny($\pm$0.0093)}
        & {\tiny($\pm$0.0108)} \\
        \bottomrule
    \end{tabular}
    }
    \caption{
    Time-dependent concordance results across four benchmark survival datasets.
    The best-performing method for each dataset is highlighted in bold.
    }
    \label{tab:concordance_table}
\end{table*}
\begin{table*}[htbp]
    \centering
    \resizebox{0.90\textwidth}{!}{
    \begin{tabular}{lccccccccc}
        \toprule
        Dataset
        & Lasso
        & Net
        & Cox
        & Ridge
        & NN
        & Tree
        & XGBoost
        & Forest
        & KESurv \\
        \midrule
        \multirow{2}{*}{Flchain}
        & 0.1036
        & 0.1035
        & 0.1034
        & 0.1034
        & 0.1067
        & 0.1056
        & 0.1333
        & 0.1033
        & \textbf{0.1027} \\
        & {\tiny($\pm$0.0040)}
        & {\tiny($\pm$0.0036)}
        & {\tiny($\pm$0.0039)}
        & {\tiny($\pm$0.0039)}
        & {\tiny($\pm$0.0055)}
        & {\tiny($\pm$0.0046)}
        & {\tiny($\pm$0.0047)}
        & {\tiny($\pm$0.0047)}
        & {\tiny($\pm$0.0045)} \\
        \addlinespace[0.35em]
        \multirow{2}{*}{Metabric}
        & 0.1658
        & 0.1658
        & 0.1659
        & 0.1658
        & 0.1672
        & 0.1730
        & 0.1749
        & 0.1690
        & \textbf{0.1637} \\
        & {\tiny($\pm$0.0099)}
        & {\tiny($\pm$0.0099)}
        & {\tiny($\pm$0.0102)}
        & {\tiny($\pm$0.0102)}
        & {\tiny($\pm$0.0057)}
        & {\tiny($\pm$0.0091)}
        & {\tiny($\pm$0.0071)}
        & {\tiny($\pm$0.0054)}
        & {\tiny($\pm$0.0063)} \\
        \addlinespace[0.35em]
        \multirow{2}{*}{RoGBSG}
        & 0.1763
        & 0.1758
        & 0.1756
        & 0.1756
        & 0.1717
        & 0.1734
        & 0.1975
        & 0.1723
        & \textbf{0.1679} \\
        & {\tiny($\pm$0.0000)}
        & {\tiny($\pm$0.0000)}
        & {\tiny($\pm$0.0000)}
        & {\tiny($\pm$0.0000)}
        & {\tiny($\pm$0.0000)}
        & {\tiny($\pm$0.0000)}
        & {\tiny($\pm$0.0000)}
        & {\tiny($\pm$0.0000)}
        & {\tiny($\pm$0.0000)} \\
        \addlinespace[0.35em]
        \multirow{2}{*}{Support}
        & 0.1952
        & 0.1951
        & 0.1951
        & 0.1951
        & 0.2007
        & 0.2002
        & 0.2151
        & \textbf{0.1913}
        & 0.1924 \\
        & {\tiny($\pm$0.0036)}
        & {\tiny($\pm$0.0036)}
        & {\tiny($\pm$0.0036)}
        & {\tiny($\pm$0.0036)}
        & {\tiny($\pm$0.0036)}
        & {\tiny($\pm$0.0057)}
        & {\tiny($\pm$0.0058)}
        & {\tiny($\pm$0.0048)}
        & {\tiny($\pm$0.0051)} \\
        \bottomrule
    \end{tabular}
    }
    \caption{
    Integrated Brier Score (IBS) results across four benchmark survival datasets.
    The best-performing method for each dataset is highlighted in bold.
    }
    \label{tab:results_table}
\end{table*}
The analysis of survival models, as outlined in Tables~\ref{tab:concordance_table} and \ref{tab:results_table}, reveals distinct performance profiles across datasets and evaluation metrics.  Net and Cox offer competitive performance, particularly on the Flchain dataset, where Net attains the best concordance. However, the proposed KESurv model distinguishes itself across datasets: KESurv achieves the best concordance on Metabric, RoGBSG, and Support, and the best IBS on Flchain, Metabric, and RoGBSG. On Support, Forest attains the lowest IBS, while KESurv obtains the best concordance. These results underscore the predictive reliability and robustness of KESurv.
Notably, KESurv performs consistently across datasets that differ in size, censoring rate, and clinical domain, whereas the baselines are strongest only on particular settings.
Because the adaptive nearest-neighbor kernel in KESurv is induced by a Survival Forest, the comparison with Forest most directly isolates what the survival-curve agreement selector contributes. KESurv improves the time-dependent concordance of Forest on all four datasets---for example, from $0.6221$ to $0.6425$ on Support and from $0.6490$ to $0.6599$ on Metabric---and it lowers the IBS on three of the four (Flchain, Metabric, and RoGBSG), with Forest marginally lower only on Support ($0.1913$ versus $0.1924$). Figure~\ref{fig:support_kesurv_vs_sf} shows that this advantage is stable across forest configurations: on Support, KESurv attains higher concordance in all $20$ depth and tree-count settings and lower IBS in $14$ of $20$. These gains indicate that filtering the forest-induced neighborhood by cross-model survival-curve agreement produces better-ranked patient-specific survival estimates, with lower integrated Brier error on most datasets, than the Survival Forest alone.
\begin{figure}[h]
    \centering
    \includegraphics[width=1\linewidth]{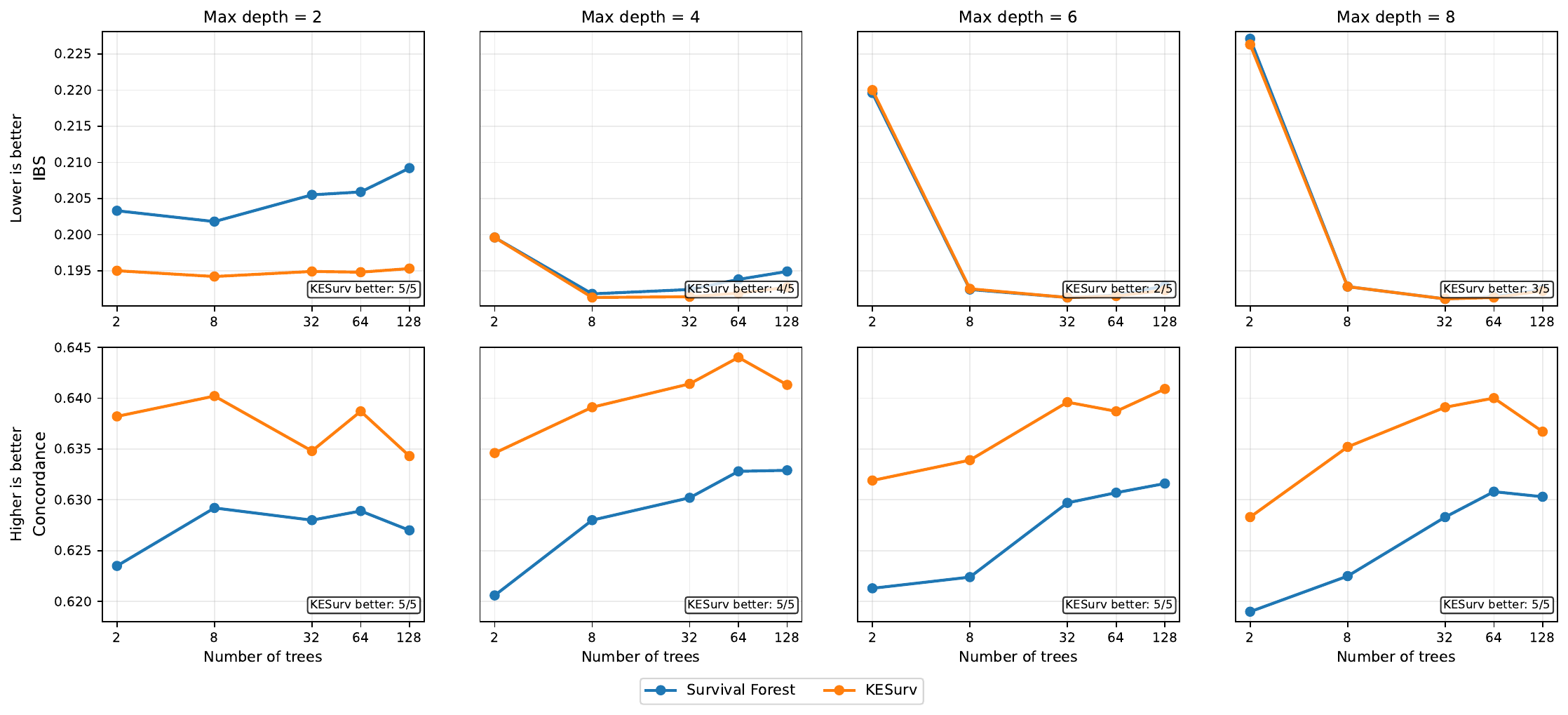}
\caption{
KESurv and Forest performance on the Support dataset across different maximum tree depths and numbers of trees.
The top row reports IBS, where lower is better, and the bottom row reports time-dependent concordance, where higher is better.
}
\label{fig:support_kesurv_vs_sf}
\vspace{-5.5mm}
\end{figure}
\section{Conclusion}
\label{conc}
In this work, we studied patient-specific survival prediction using machine learning methods for right-censored time-to-event data.
We proposed KESurv, an ensemble-based survival method that constructs a survival-function-aware neighborhood around each test sample.
Rather than relying only on covariate-space similarity, KESurv identifies neighboring patients whose predicted survival profiles are close to that of the query patient.
This combines the interpretability of neighborhood-based estimation with the flexibility of ensemble survival modeling.
Across the evaluated medical datasets, KESurv achieved competitive and often improved performance compared with standard survival models, indicating that survival-profile similarity can provide useful information for estimating patient-specific survival curves.
The method also offers a natural form of interpretability, since predictions can be examined through groups of patients with similar estimated survival behavior.
The main limitation of KESurv is computational cost.
Computing survival-function-based distances can be expensive when the number of patients, time points, or base models is large.
Although this cost can be partly reduced by precomputing and storing distance information, scalable storage and fast retrieval remain important practical challenges for large clinical datasets.
Future work should investigate efficient distance representations, approximate nearest-neighbor search, and subgroup discovery methods based on the proposed kernel.
These directions may improve both the scalability and interpretability of machine learning survival models in clinical and epidemiological applications.

\clearpage
\statmlappendix
\section{KESurv Algorithm and Complexity}
\label{sec:kesurv_algorithm} 
\paragraph{Algorithmic roadmap.}\enspace
KESurv proceeds in three stages:
\begin{enumerate}
    \item Obtain learner-specific survival profiles and compare
    the query with forest-supported training observations using
    $D_{\ell}^{\mathrm{ABS}}$.
    \item Convert these distances into learner-specific votes
    and apply the $m$-out-of-$M$ selector
    $I_{\epsilon,m}$.
    \item Multiply the selector by the forest-induced ANN
    weights, normalize using
    Eq.~\ref{eq:proposed_kernel}, and construct the resulting
    Beran (kernel-weighted Kaplan--Meier/product-limit)
    estimator.
\end{enumerate}
The base learners are fitted and the training survival profiles
are cached before processing held-out queries. Algorithm~\ref{alg:kesurv} describes prediction for a fitted model.
\begin{algorithm}[H]
\caption{Kernel Ensemble Survival (KESurv)}
\label{alg:kesurv}
\begin{algorithmic}[1]
\STATE \textbf{Input:} $\mathcal{D}_n=\{(X_i,Y_i,\delta_i)\}_{i=1}^{n}$, query point $x$, base learners $\mathcal{G}=\{g_1,\ldots,g_M\}$, ANN weights $\{\widehat{K}_{\mathrm{ANN}}(x,X_j)\}_{j=1}^{n}$, time grid $\{\tau_b\}_{b=0}^{P}$, threshold $\epsilon$, and agreement level $m$.
\STATE \textbf{Output:} $\widehat{S}_{\mathrm{KESurv}}(t\mid x;\epsilon,m)$.
\STATE Obtain the forest-supported set
$\mathcal{A}(x)=
\{j:\widehat{K}_{\mathrm{ANN}}(x,X_j)>0\}$.
\STATE Initialize
$\widetilde{K}_{\epsilon,m}^{\mathrm{KES}}(x,X_j)=0$
for all $j=1,\ldots,n$.
\FOR{$\ell=1,\ldots,M$}
    \STATE Use the fitted learner $g_\ell$ to obtain the query
    survival profile on the common time grid.
    \STATE Retrieve the cached training survival profiles and
    compute $D_{\ell}^{\mathrm{ABS}}(x,X_j)$ only for
    $j\in\mathcal{A}(x)$ using
    Eq.~\ref{eq:abs_distance_discrete}.
\ENDFOR
\FOR{$j\in\mathcal{A}(x)$}
    \STATE Compute the $m$-out-of-$M$ selector
    $I_{\epsilon,m}(x,X_j)$ using
    Eq.~\ref{eq:consensus_selector}.
    \STATE Compute the unnormalized KESurv weight
    $\widetilde{K}_{\epsilon,m}^{\mathrm{KES}}(x,X_j)$
    using Eq.~\ref{eq:proposed_kernel_unnormalized}.
\ENDFOR
\STATE Compute $Z_{\epsilon,m}(x)$ using Eq.~\ref{eq:proposed_kernel_mass}.
\STATE Normalize the weights to obtain $\widehat{K}_{\epsilon,m}^{\mathrm{KES}}(x,X_j)$ using Eq.~\ref{eq:proposed_kernel}.
\STATE Construct $\mathcal{Y}_{\epsilon,m}^{\mathrm{KES}}(x)$ using Eq.~\ref{eq:proposed_event_times}.
\STATE Compute $d_{\epsilon,m}^{\mathrm{KES}}(t\mid x)$ and $r_{\epsilon,m}^{\mathrm{KES}}(t\mid x)$ using Eqs.~\ref{eq:proposed_death_count}--\ref{eq:proposed_risk_count}.
\STATE Compute $\widehat{S}_{\mathrm{KESurv}}(t\mid x;\epsilon,m)$ using Eq.~\ref{eq:proposed_survival_estimator}.
\STATE \textbf{return} $\widehat{S}_{\mathrm{KESurv}}(t\mid x;\epsilon,m)$.
\end{algorithmic}
\end{algorithm}
\paragraph{Computational complexity}\enspace
Let $n$ denote the number of training samples, $M$ the number of base survival learners, and $P$ the number of time-grid points used to represent each survival curve.
For a single query point $x$, the main computational cost of KESurv comes from comparing the predicted survival curve of $x$ with the predicted survival curves of all training samples under each base learner.
If the survival predictions of the training samples have already been computed and stored, the survival-distance computation requires $O(MnP)$ time, since each of the $M$ learners compares $x$ with all $n$ training samples over $P$ time points.
The $m$-out-of-$M$ agreement selector and weight normalization require an additional $O(Mn)$ and $O(n)$ time, respectively.
Computing the weighted event and risk counts can be implemented naively in $O(nP)$ time, or more efficiently using sorted event times and cumulative sums.
Thus, ignoring the cost of fitting the base learners, the dominant per-query cost is typically $O(MnP)$, plus the cost of evaluating the ANN kernel $\widehat{K}_{\mathrm{ANN}}(x,X_j)$.
The latter depends on the forest implementation: traversing trees is usually efficient, but constructing the full set of nonzero forest-induced weights may still be expensive when many training samples share terminal nodes with the query point.
For $n_{\mathrm{test}}$ test samples, the direct prediction cost scales as $O(n_{\mathrm{test}}MnP)$, which can be substantial for large datasets, dense time grids, or many base learners.
The method also has a nontrivial memory cost if survival predictions or survival-distance matrices are precomputed; storing all base-learner survival curves for the training set requires $O(MnP)$ memory, while storing pairwise survival distances for many query--training pairs can become prohibitive.
Therefore, the current implementation is most practical for moderate-sized survival datasets, and scalable versions should consider sparse neighborhoods, approximate nearest-neighbor search, reduced time grids, low-dimensional survival-curve embeddings, or on-demand distance computation.
\paragraph{Forest-supported computation and storage}\enspace
Because
$\widetilde{K}_{\epsilon,m}^{\mathrm{KES}}(x,X_j)=0$
whenever $\widehat{K}_{\mathrm{ANN}}(x,X_j)=0$,
survival-curve distances need only be computed for
$j\in\mathcal{A}(x)$.
This reduces the distance-computation cost to
$O(M|\mathcal{A}(x)|P)$, with the benefit depending on
the sparsity of the forest neighborhood.
This bound excludes forest traversal, query-profile evaluation,
and construction of the final survival estimate.
Caching the training survival profiles requires $O(MnP)$
storage; storing a full query--training distance matrix is
optional and unnecessary.
Distances and agreement counts can instead be streamed,
using $O(n+P)$ additional working memory beyond the fitted
models and cached profiles.
The reported experiments do not include a controlled,
hardware-matched runtime or peak-memory benchmark.
Such profiling, together with approximate neighbor retrieval
and reduced-dimensional survival-profile representations,
remains future work.
\section{Bayesian Optimization and Hyperparameter Search Space} \label{app:hyperparam}
\subsection{Bayesian Optimization Strategy}
Hyperparameters were selected by minimizing a common validation objective across all models.
For a candidate hyperparameter configuration $\theta$, the objective function was defined as
\[
\mathcal{J}(\theta)
=
2\left(1-\mathrm{Concordance}(\theta)\right)
+
4\,\mathrm{IBS}(\theta),
\]
where $\mathrm{Concordance}(\theta)$ denotes the time-dependent concordance index $C_{td}$ and $\mathrm{IBS}(\theta)$ denotes the integrated Brier score.
Lower values of $\mathcal{J}(\theta)$ indicate better validation performance.
We used Bayesian optimization for hyperparameter tuning, as it provides a more efficient alternative to exhaustive search by using previous evaluations to guide the selection of future hyperparameter configurations \citep{snoek2012practical}.
In particular, we used the \texttt{Hyperopt} library with the Tree-structured Parzen Estimator (TPE) algorithm \citep{bergstra2011algorithms}.
TPE is a sequential model-based Bayesian optimization method that models promising and non-promising regions of the search space separately and proposes new configurations that are more likely to improve the objective function.
This is especially useful for our setting because evaluating each hyperparameter configuration requires training a survival model and computing validation performance using both time-dependent concordance and integrated Brier score.

\paragraph{Rotterdam--GBSG model selection.}\enspace
For the Rotterdam--GBSG experiment, Rotterdam was partitioned
into a training subset and a 15\% validation subset.
Each candidate configuration was fitted on the training subset
and evaluated on the validation subset using
$\mathcal{J}(\theta)$.
The selected fitted model was not refitted on the combined
Rotterdam cohort and was evaluated once on GBSG.
Neither GBSG covariates nor outcomes were used for fitting
or hyperparameter selection.
GBSG covariates were used only as query inputs at prediction
time, and its outcomes only for final evaluation.
\subsection{Hyperparameter Search Space}
\begin{table}[t]
\centering
\small
\renewcommand{\arraystretch}{1.15}
\begin{tabular}{p{3.4cm} p{10.2cm}}
\toprule
\textbf{Model} & \textbf{Search Space} \\
\midrule
Lasso
& $\alpha_{\min\_\mathrm{ratio}} \in \{0.01,0.02,0.05,0.07\}$ \\
Net
& $\mathrm{l1\_ratio}$ linearly spaced in $[0,1]$ with 20 values \\
Cox
& $\mathrm{ties} \in \{\mathrm{efron},\mathrm{breslow}\}$,
$n_{\mathrm{iter}} \in \{100,200,300,400\}$ \\
Ridge
& $\ell_2$ regularizer linearly spaced in $[0,1]$ with 20 values \\
NN
& $k_{\mathrm{NN}} \in \{10,15,20,25,\ldots,60\}$,
distance $\in \{\mathrm{Manhattan},\mathrm{Euclidean}\}$ \\
Tree
& $\mathrm{max\_features} \in \{\sqrt{p},\log_2(p),3,4,5,6,7,8\}$,
$\mathrm{min\_samples\_leaf} \in \{8,32,128\}$ \\
XGBoost
& $\lambda_{\mathrm{xgb}} \sim U(0,1)$, $\alpha_{\mathrm{xgb}} \sim U(0,1)$,
feature selector $\in \{\mathrm{cyclic},\mathrm{shuffle}\}$ \\
Forest
& $\mathrm{max\_features} \in \{3,4,5\}$,
$\mathrm{min\_samples\_leaf} \in \{2,4,8\}$ \\
KESurv (proposed)
& $m \in \{1,2,3,4\}$, $\epsilon \sim U(0,1)$ \\
\bottomrule
\end{tabular}
\caption{Hyperparameter search space optimized using Bayesian optimization with Hyperopt-TPE.}
\label{tab:search_space}
\end{table}
The tuned hyperparameters control regularization, model complexity, neighborhood structure, and ensemble-based survival similarity.
For penalized Cox models, the regularization parameters control sparsity and coefficient shrinkage.
For nearest-neighbor methods, $k_{\mathrm{NN}}$ and the distance metric determine the locality and similarity structure.
For tree-based models, $\mathrm{max\_features}$ controls split diversity, while $\mathrm{min\_samples\_leaf}$ controls terminal-node smoothness and prevents very small risk sets.
For XGBoost, the L1 and L2 penalties control sparsity and weight shrinkage, while the feature selector determines the coordinate update strategy.
For KESurv, $m$ is the minimum number of agreeing base
learners required to retain a training observation; all $M$
learners participate in the selector.
The parameter $\epsilon$ controls the survival-curve
dissimilarity threshold.
\begin{table*}[b]
    \centering
    \resizebox{\textwidth}{!}{
    \begin{tabular}{lccccccccc}
        \toprule
        Dataset & Lasso & Net & Cox & Ridge & NN & Tree & XGBoost & Forest & KESurv \\
        \midrule
        Flchain & 0.8468 & \textbf{0.8460} & 0.8466 & 0.8466 & 0.9308 & 0.8880 & 0.9682 & 0.8756 & 0.8480 \\
        Metabric & 1.4070 & 1.4066 & 1.4084 & 1.4076 & 1.3932 & 1.4504 & 1.4182 & 1.3780 & \textbf{1.3350} \\
        RoGBSG & 1.4608 & 1.4648 & 1.4680 & 1.4680 & 1.3816 & 1.4090 & 1.5462 & 1.3890 & \textbf{1.3596} \\
        Support & 1.5870 & 1.5872 & 1.5874 & 1.5874 & 1.6042 & 1.6060 & 1.6640 & 1.5210 & \textbf{1.4846} \\
        \bottomrule
    \end{tabular}
    }
    \caption{Composite criterion $\mathcal{J}(\theta)=2\,(1-\mathrm{Concordance}(\theta))+4\,\mathrm{IBS}(\theta)$ across four benchmark survival datasets. Lower is better; the best method per dataset is in bold.}
    \label{tab:J_table}
\end{table*}
To further compare the methods across datasets, we applied the Friedman test followed by the Nemenyi multiple-comparison procedure using the composite score $\mathcal{J}(\theta)$ reported in Table~\ref{tab:J_table}, where lower values indicate better performance.
The Friedman test rejected the null hypothesis of equal average ranks across methods ($p=0.024$), indicating statistically detectable differences among the evaluated models.
The corresponding Nemenyi multiple-comparison-with-the-best plot is shown in Figure~\ref{fig:nemenyi_mcb_j}.
KESurv achieved the lowest mean rank, followed by Forest and Net.
However, because the comparison is based on only four datasets, the critical distance is relatively large.
Therefore, the Nemenyi plot should be interpreted as supportive evidence that KESurv is among the strongest-performing methods, rather than as definitive evidence of pairwise superiority over every baseline.
\begin{figure}[htbp]
    \centering
    \includegraphics[width=0.6\textwidth]{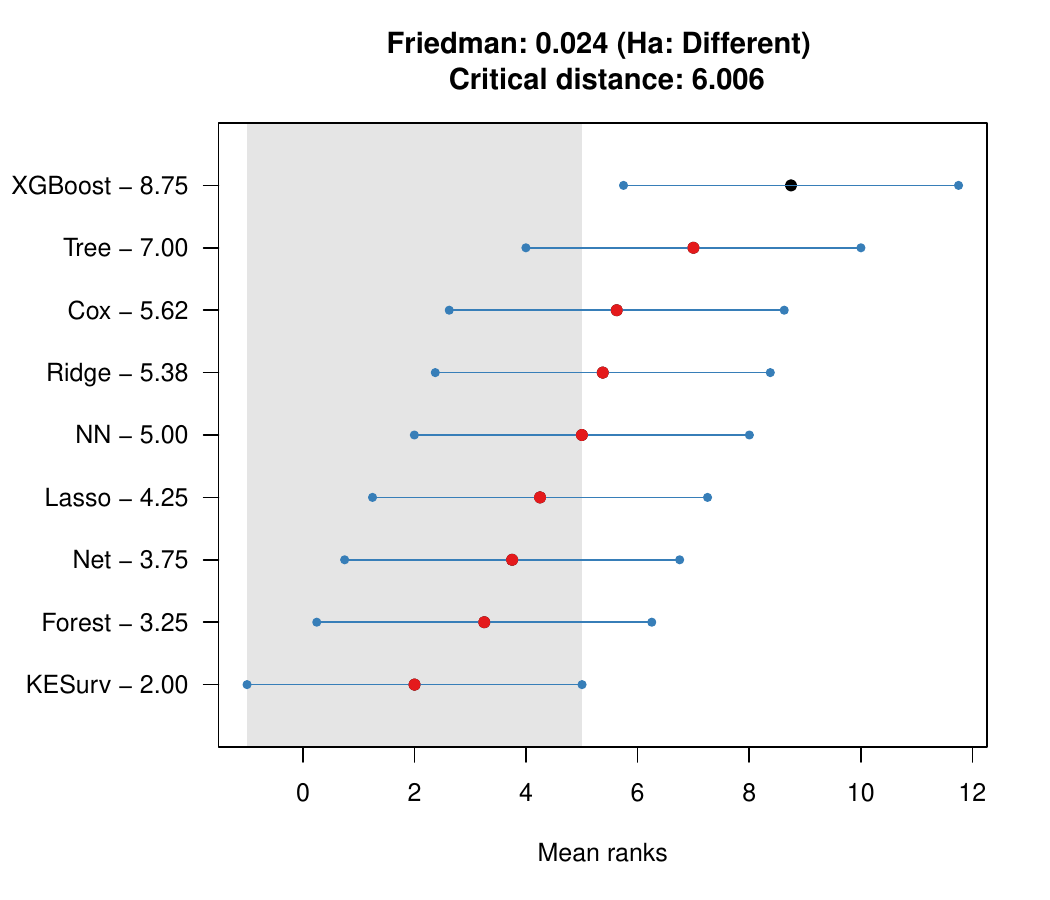}
    \caption{
    Nemenyi multiple-comparison-with-the-best plot based on the composite score
    $\mathcal{J}(\theta)=2(1-\mathrm{Concordance}(\theta))+4\,\mathrm{IBS}(\theta)$.
    Lower ranks indicate better overall performance.
    The Friedman test rejects equal average ranks across methods ($p=0.024$), and KESurv obtains the lowest mean rank.
    }
    \label{fig:nemenyi_mcb_j}
\end{figure}

\subsection{Base-Learner Eligibility and Diversity}

A base learner is eligible for KESurv if it outputs a valid
survival curve on the common time grid.
Eligibility alone does not establish usefulness.
In practice, we favor learners with useful prognostic signal
on validation data and nonredundant inductive biases.

The reported ensemble contains Cox Lasso, Cox Ridge,
nearest-neighbor survival, and a survival tree.
Lasso and Ridge share a proportional-hazards structure,
although their regularization differs; they should therefore
not be interpreted as independent model families.
Nearest-neighbor survival depends on the covariate metric
and local sample density, whereas a survival tree captures
nonlinearities and interactions through partition-based
neighborhoods.
Here, diversity refers to complementary modeling biases,
rather than statistical independence between predictions.
Because the learner predictions provide votes rather than
being averaged, including a Cox learner does not impose
proportional hazards on the final KESurv survival curve.

The $m$-out-of-$M$ rule retains a candidate supported by at
least $m$ learners despite disagreement from the remaining
$M-m$ learners.
Conversely, fewer than $m$ positive votes cannot admit a
candidate on their own.
This does not provide universal protection against a poor
learner: at $m=1$, one learner can admit a candidate, while
at $m=M$, one disagreeing learner can exclude it.
An uninformative or redundant collection can therefore harm
performance, and $m$ is selected using validation data.

The consistency conditions concern the resulting weighted
neighborhood: Assumptions 5--6 in Appendix~\ref{app:theory}
require local event-hazard homogeneity and a diverging
effective sample size.
These properties are not guaranteed merely by choosing
particular learner families.
\section{Performance Metrics and Calibration Results} \label{app:metric_subsection}
We employ two metrics, time-dependent concordance \cite{antolini2005time} and the Integrated Brier Score \cite{graf1999assessment}, and we additionally assess D-calibration \cite{haider2020effective}. For a given dataset $\{X_i,Y_i,\delta_i\}_{i=1}^n$, the time-dependent concordance index is defined as:
$$
C_{td} = \mathbb{P}(\widehat{S}(T_i \mid X_i) < \widehat{S}(T_i \mid X_j) \mid T_i < T_j, \delta_i = 1 ),
$$
that is, of two comparable subjects the one who experiences the event earlier is assigned the lower predicted survival at the shared comparison time $T_i$~\citep{antolini2005time}.
The Integrated Brier Score (IBS) quantifies the accuracy of predicted survival probabilities over a specified time interval, taking into account censoring. It is defined as:
\[
\mathrm{IBS} = \frac{1}{t_{\max}} \int_0^{t_{\max}} \mathrm{BS}(s) \, ds,
\]
 where the time-dependent Brier score $\mathrm{BS}(t)$ at time $t$ is defined as follows:
\begin{equation}
	\begin{split}
		\mathrm{BS}(t) = \frac{1}{n} \sum_{i=1}^{n} \bigg[ & \ind\{Y_i \leq t, \delta_i = 1\} \left(\frac{(0 - \widehat{S}(t \mid X_i))^2}{\widehat{G}(Y_i-)}\right)  + \ind\{Y_i > t\} \left(\frac{(1 - \widehat{S}(t \mid X_i))^2}{\widehat{G}(t)}\right) \bigg],
	\end{split}
\end{equation}
where $\widehat{S}(t \mid x)$ represents the predicted probability that an individual with feature vector $x$ remains event-free up to time $t$, and $\frac{1}{\widehat{G}(t)}$ is the inverse-probability-of-censoring weight, where $G(t)=\mathbb{P}(C\ge t)$ denotes the censoring survival function and $\widehat{G}(t)$ is its Kaplan--Meier estimate. In our experimentation, we have taken $t_{\max}$ as $\max_{i}Y_{i}$.
D-calibration is a critical method for evaluating whether patients can trust the survival curves predicted by a model. Given a dataset $\mathcal{D}_n$, a survival model function $\widehat{S}$, and an interval $[\gamma_0,\gamma_1] \subset [0,1]$, we define:
\[
\mathcal{I}_{\widehat{S}}([\gamma_0,\gamma_1]) = \{\, i : \delta_i = 1,\ \widehat{S}(Y_i \mid X_i) \in [\gamma_0,\gamma_1] \,\},
\]
The set
$\mathcal{I}_{\widehat{S}}([\gamma_0,\gamma_1])$
records the uncensored observations whose predicted survival
probability at the observed event time lies in the specified
interval.
For a correctly specified continuous survival distribution,
the probability transform $S(T\mid X)$ is uniformly
distributed on $[0,1]$.
Under censoring, an assessment based only on observed events
does not generally retain this uniformity; a
censoring-adjusted D-calibration procedure must also account
for censored observations, as described by
\citet{haider2020effective}.
Table~\ref{tab:results_dcalibration} reports the
D-calibration results.
\begin{table}[htbp]
    \centering
    \begin{tabular}{lrrrr}
        \toprule
        Models & Flchain & Metabric & RoGBSG & Support \\
        \midrule
        Lasso & 1.0000 & 0.7318 & 0.3881 & \textcolor{red}{0.0033} \\
        Net & 0.9999 & 0.7520 & 0.4058 & \textcolor{red}{0.0048} \\
        Cox & 1.0000 & 0.3968 & 0.3533 & \textcolor{red}{0.0034} \\
        Ridge & 1.0000 & 0.4267 & 0.3533 & \textcolor{red}{0.0032} \\
        NN & 0.4030 & 0.9658 & 0.4378 & \textcolor{red}{$<0.0001$} \\
        Tree & 0.9993 & 0.8271 & 0.4277 & \textcolor{red}{0.0109} \\
        XGBoost & \textcolor{red}{0.0001} & 0.5148 & \textcolor{red}{0.0306} & \textcolor{red}{$<0.0001$} \\
        Forest & 0.9993 & 1.0000 & 1.0000 & 0.8510 \\ \hline
        KESurv (proposed) & 0.9999 & 0.9996 & 0.5799 & 0.0807 \\
        \bottomrule
    \end{tabular}
    \caption{Aggregated D-calibration $p$-values across experiments with different random seeds, combined using Fisher's method \cite{fisher1925statistical}. Fisher's method computes the test statistic $\chi^2 = -2 \sum_{i=1}^{k} \ln(p_i)$, where $k$ represents the number of tests. The test did not reject D-calibration at the 5\% level. Red text identifies models that fail to meet this threshold ($p \le 0.05$).}
    \label{tab:results_dcalibration}
\end{table}
The D-calibration results further underscore the reliability of the KESurv model, with satisfactory calibration across all datasets, as evidenced by its ability to maintain p-values well above the 0.05 threshold, indicating reliable survival curve prediction. In contrast, models such as XGBoost and NN exhibit lower p-values in the Support dataset, which may suggest overfitting and less reliable predictions under certain conditions. IBS measures integrated survival-probability accuracy and
reflects both discrimination and calibration; it is not a
pure calibration metric.
We therefore distinguish time-dependent concordance for
discrimination, IBS for overall probabilistic accuracy,
and D-calibration for a dedicated calibration assessment.

\subsection{Scope of the Empirical Comparisons}

The evaluation examines whether survival-curve consensus
improves the matched Forest estimator and remains competitive
among the evaluated classical survival methods.
KESurv improves Forest concordance on all four datasets and
IBS on three of four; these comparisons do not establish
superiority over all survival-learning methods.

On FLCHAIN, Net achieves concordance 0.7840 compared with
0.7814 for KESurv, whereas KESurv achieves the lowest IBS,
0.1027.
The concordance difference of 0.0026 is small relative to
the reported across-split standard deviations, but this
does not establish statistical equivalence.
Model preference can therefore depend on whether
discrimination or survival-probability accuracy is the
primary objective.

Modern deep-survival models were not evaluated as comparators
or as KESurv base learners.
They could supply survival-profile votes when they provide
valid survival curves on the common time grid, but this
compatibility is not empirical evidence of improved
performance.
Evaluation against suitably tuned deep-survival baselines,
and their inclusion as base learners, remain future work.

\section{Ablation Study} \label{app:ablation}
In this section, we present an ablation study assessing the sensitivity of KESurv to the choice of survival-curve distance metric. The results are summarized in Table~\ref{tab:concordance_table_dist}.
\paragraph{Variations in Distance Metrics}\enspace
We initially employed the ABS distance metric, as defined in equation \ref{eq:abs_distance_continuous}, which yielded satisfactory results. However, to explore the impact of different distance metrics on model performance, we experimented with several alternatives.
\paragraph{Median Distance}\enspace
We defined the median survival time for a covariate vector $u$ under learner $g_\ell$, denoted $t_{\ell,0.5}(u)$, as the time at which the estimated survival function $\widehat{S}_{\ell}$ indicates a survival probability of 0.5. The distance between two covariate vectors $u$ and $v$ based on their median survival times is computed as follows:
\begin{equation}
	D_{\ell}^{\mathrm{Median}}(u, v) = \left| t_{\ell,0.5}(u) - t_{\ell,0.5}(v) \right|
\end{equation}
\paragraph{Maximum Distance}\enspace
To capture the maximum discrepancy between two survival curves, we defined the maximum distance as the greatest absolute difference across all time points, given by:
\begin{equation}
	D_{\ell}^{\mathrm{Max}}(u, v) = \max_{t} \left| \widehat{S}_{\ell}(t \mid u) - \widehat{S}_{\ell}(t \mid v) \right|
\end{equation}
\begin{table}[htbp]
	\centering
	\begin{tabular}{llcccc}
		\toprule
		& Distances & Flchain & Metabric & RoGBSG & Support \\
		\midrule
		\multirow{2}{*}{\rotatebox[origin=c]{90}{Con.}}
        & Median & 0.7786 \tiny{$\pm$0.0157} & 0.6544 \tiny{$\pm$0.0148} & 0.6509  & 0.6409 \tiny{$\pm$0.009} \\
        & Max & 0.7821 \tiny{$\pm$0.0131} & 0.6558 \tiny{$\pm$0.0198} & 0.6636  & 0.6415 \tiny{$\pm$0.0098} \\ \hline
        \multirow{2}{*}{\rotatebox[origin=c]{90}{IBS}}
        & Median & 0.1029  \tiny{$\pm$0.0048} & 0.1648  \tiny{$\pm$0.0056} &   0.1665   & 0.1919  \tiny{$\pm$0.0051} \\
        & Max &  0.1028  \tiny{$\pm$0.0046} &0.1658  \tiny{$\pm$0.009} &  0.1673  &  0.1926  \tiny{$\pm$0.0049} \\
		\bottomrule
	\end{tabular}
	\caption{The proposed model with performance for different distances}
	\label{tab:concordance_table_dist}
\end{table}
\section{Proofs of Theoretical Properties} \label{app:theory}
This appendix gives the theoretical details for the KESurv estimator. The arguments build on standard results from product-limit estimation, counting-process survival analysis, conditional Kaplan--Meier estimation, nearest-neighbor/kernel survival estimation, and forest-induced adaptive weighting. The weighted product-limit form follows the Kaplan--Meier estimator for right-censored data~\citep{kaplan1958nonparametric}. The martingale decomposition used in the consistency proof follows the counting-process formulation of censored survival data~\citep{andersen1982cox,fleming1991counting,andersen1993statistical}. The local consistency argument is related to Beran's conditional survival estimator and subsequent consistency results for kernel and nearest-neighbor conditional Kaplan--Meier estimators~\citep{beran1981nonparametric,dabrowska1987nonparametric,dabrowska1989uniform,akritas1994nearest,chen2019nearest}. The KESurv selector is motivated by the use of multiple base learners as a collective proximity rule, as in COBRA-type aggregation~\citep{biau2016cobra}. The adaptive smoothing component is motivated by the interpretation of random forests as adaptive nearest-neighbor methods~\citep{lin2006random}, forest-induced proximity weights~\citep{meinshausen2006quantile}, generalized random-forest weights~\citep{athey2019generalized}, and random survival forests for right-censored outcomes~\citep{ishwaran2008random,ishwaran2010consistency}. Our contribution is to combine these ingredients into a survival-curve-guided adaptive kernel, where a survival-forest weight is filtered by an $m$-out-of-$M$ agreement selector. For a fixed query point $x$, define \[ w_{n,j}(x) = \widehat{K}_{\epsilon,m}^{\mathrm{KES}}(x,X_j), \qquad j=1,\ldots,n . \] Let \[ N_j(t)=\ind\{Y_j\le t,\delta_j=1\}, \qquad R_j(t)=\ind\{Y_j\ge t\} \] denote the event-counting and at-risk processes for subject $j$. The corresponding weighted event and risk processes are \[ N_n^w(t\mid x) = \sum_{j=1}^{n}w_{n,j}(x)N_j(t), \qquad R_n^w(t\mid x) = \sum_{j=1}^{n}w_{n,j}(x)R_j(t). \] The KESurv estimator can then be written as the weighted product-limit estimator
\[ \widehat{S}_{\mathrm{KESurv}}(t\mid x;\epsilon,m) = \prod_{s\le t} \left\{ 1- dN_n^w(s\mid x)/R_n^w(s\mid x) \right\}, \] where terms with zero weighted risk are omitted. This is the usual Kaplan--Meier product-limit form with ordinary event and risk counts replaced by KESurv-weighted counts~\citep{kaplan1958nonparametric,beran1981nonparametric,chen2019nearest}.
\subsection{Assumptions for the Consistency Result}
The deterministic properties of the KESurv weights do not require additional probabilistic assumptions. For the consistency theorem, however, we analyze an honest or cross-fitted version of the estimator. This is standard when adaptive weights are learned from data, because it separates the construction of the local neighborhood from the event process used in the weighted Kaplan--Meier estimator. Similar honesty or sample-splitting ideas are commonly used in forest-based theory to control adaptivity~\citep{athey2019generalized}.
\begin{assumption}
The observations $(X_i,Y_i,\delta_i)$, $i=1,\ldots,n$, are independent and identically distributed, where
\[
    Y_i=\min(T_i,C_i),
    \qquad
    \delta_i=\ind\{T_i\le C_i\}.
\]
Here $T_i$ is the event time and $C_i$ is the censoring time.
\end{assumption}
\begin{assumption}
The event time and censoring time are conditionally independent given covariates:
\[
    T_i \perp C_i \mid X_i .
\]
This is the standard independent censoring condition used in conditional Kaplan--Meier and counting-process survival analysis~\citep{beran1981nonparametric,dabrowska1989uniform,andersen1993statistical}.
\end{assumption}

\paragraph{Interpretation of conditional independent censoring.}\enspace
The condition $T\perp C\mid X$ permits the censoring
distribution to depend on the observed covariates $X$;
it does not require marginal independence between $T$
and $C$.
It is also used in the theoretical development of
nearest-neighbor and kernel survival estimators
\citep{chen2019nearest,chen2024survival}.
This assumption cannot generally be verified from the
observed censored data alone.
Dependence between event and censoring times remaining
after conditioning on $X$, including dependence through
unmeasured factors, is a limitation of the analysis.
\begin{assumption}
For the purpose of the asymptotic argument, the KESurv weights $w_{n,j}(x)$ are constructed using covariates and an auxiliary training sample, or by a cross-fitting scheme, so that conditional on the covariates and the auxiliary sample, the weight assigned to subject $j$ does not depend on that subject's own event or censoring outcome. This allows the weighted counting-process martingale argument to be applied conditionally on the learned weights.
\end{assumption}
\begin{assumption} Fix a time horizon $\tau<t_{\max}$. For the query point $x$ and for every covariate value $u$ receiving positive KESurv weight, the conditional event-time distribution is absolutely continuous on $[0,\tau]$ with event hazard $\lambda(t\mid u)$. The event hazard is uniformly bounded: there exists $\lambda_{\max}<\infty$ such that
\[
    \sup_{0\le t\le \tau}
    \lambda(t\mid u)
    \le \lambda_{\max}
\]
for all such $u$ (including $u=x$). In addition, the conditional censoring survival function is positive on $[0,\tau]$: there exists $\eta>0$ such that
\[
    \mathbb{P}(C\ge \tau\mid X=u)\ge \eta
\]
for all covariate values receiving positive KESurv weight. The absolute-continuity condition implies that distinct subjects have no common event times with probability one, which is used in the martingale orthogonality and jump-size arguments.
\end{assumption}
\begin{assumption}
The selected and positively weighted observations are locally homogeneous with respect to the target event hazard at $x$:
\[
    \Delta_n(x)
    =
    \max_{j:w_{n,j}(x)>0}
    \sup_{0\le t\le \tau}
    \left|
    \lambda(t\mid X_j)-\lambda(t\mid x)
    \right|
    \xrightarrow{p}0 .
\]
This condition plays the same role as the shrinking-neighborhood or bandwidth condition in kernel and nearest-neighbor survival estimators~\citep{beran1981nonparametric,dabrowska1989uniform,akritas1994nearest,chen2019nearest}.
\end{assumption}
\begin{assumption}
The KESurv weights do not concentrate on finitely many observations:
\[
    \sum_{j=1}^{n}w_{n,j}^{2}(x)
    \xrightarrow{p}0 .
\]
Equivalently, the effective local sample size
\[
    n_{\mathrm{eff}}(x)
    =
    \left\{
    \sum_{j=1}^{n}w_{n,j}^{2}(x)
    \right\}^{-1}
\]
diverges in probability. This is analogous to standard local-smoothing requirements in nearest-neighbor and kernel survival estimation~\citep{akritas1994nearest,chen2019nearest}.
\end{assumption}
\begin{assumption}
There exists $c>0$ such that
\[
    \mathbb{P}\left(
    \inf_{0\le t\le \tau}
    R_n^w(t\mid x)
    \ge c
    \right)
    \longrightarrow 1 .
\]
This condition ensures that the weighted Kaplan--Meier estimator is evaluated on time intervals where the local weighted risk set is nondegenerate.
\end{assumption}
The assumptions above separate standard survival-estimation requirements (Assumptions 1, 2, 4, and 7) from KESurv-specific requirements (Assumptions 3, 5, and 6)~\citep{kaplan1958nonparametric,beran1981nonparametric,dabrowska1989uniform,athey2019generalized,fleming1991counting,andersen1993statistical,akritas1994nearest,chen2019nearest}. The consistency theorem should therefore be interpreted as follows: if the base survival learners and the ANN forest weights produce neighborhoods that become increasingly local in survival behavior while retaining a diverging effective sample size, then the KESurv weighted Kaplan--Meier estimator consistently estimates the target patient-specific survival curve. The theorem does not claim that KESurv uniformly dominates random survival forests or other survival estimators; rather, it establishes that the proposed weighting mechanism is compatible with standard nonparametric survival-estimation theory under appropriate locality, censoring, and effective-sample-size conditions.

\paragraph{Scope and bias--variance interpretation.}\enspace
Theorem~1 concerns the honest/cross-fitted construction
satisfying Assumption 3.
It does not establish consistency of the no-split
implementation used in the reported experiments. The proof separates cumulative-hazard estimation error
into local bias and martingale fluctuation.
Assumption 5 bounds the local bias by
$\tau\Delta_n(x)$, which vanishes as the positively weighted
neighborhood becomes homogeneous in event hazard.
Assumption 6 makes
$\sum_{j=1}^{n}w_{n,j}^{2}(x)$ vanish, while Assumption 7
keeps the weighted risk denominator nondegenerate.
Together, these conditions control the stochastic term.
They describe the required balance between locality and
effective sample size, rather than guaranteeing that every
choice of base learners or tuning parameters achieves it.
\subsection{Uniform Consistency}
Let
\[
    \Lambda(t\mid x)
    =
    \int_{0}^{t}
    \lambda(s\mid x)\,ds
\]
denote the conditional cumulative event hazard at the query point. Define the weighted Nelson--Aalen-type estimator
\[
    \widehat{\Lambda}_n^w(t\mid x)
    =
    \int_{0}^{t}
    dN_n^w(s\mid x)/R_n^w(s\mid x).
\]
This estimator is the weighted analogue of the Nelson--Aalen cumulative hazard estimator and is closely connected to the Kaplan--Meier product-limit estimator~\citep{fleming1991counting,andersen1993statistical}.
Throughout, the tuning parameters $(\epsilon,m)$ are treated as fixed or, more generally, as $n$-dependent choices $(\epsilon_n,m_n)$ for which the induced KESurv weights satisfy Assumptions 5--7.
\begin{theorem}[Uniform consistency of KESurv]
Under Assumptions 1--7,
\[
    \sup_{0\le t\le \tau}
    \left|
    \widehat{S}_{\mathrm{KESurv}}(t\mid x;\epsilon,m)
    -
    S(t\mid x)
    \right|
    \xrightarrow{p}
    0 .
\]
\end{theorem}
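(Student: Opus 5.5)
We first prove uniform consistency of the weighted Nelson--Aalen estimator $\widehat{\Lambda}_n^w(t\mid x)$ for $\Lambda(t\mid x)$ on $[0,\tau]$. We then transfer this to the product-limit estimator through the product-integral relationship.

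\textbf{Step 1 (decomposition).} Conditional on the covariates and on the auxiliary or cross-fitting sample, the weights $w_{n,j}(x)$ are fixed (Assumption 3). Under Assumptions 1--2, each $M_j(t)=N_j(t)-\int_0^t R_j(s)\lambda(s\mid X_j)\,ds$ is a martingale with respect to the natural filtration. The $M_j$ are mutually orthogonal because, by Assumption 4, distinct subjects share no jump times almost surely. Writing $dN_n^w=\sum_j w_{n,j}\,dM_j+\sum_j w_{n,j}R_j\lambda(\cdot\mid X_j)\,ds$, on the event $E_n=\{\inf_{t\le\tau}R_n^w(t\mid x)\ge c\}$ we obtain
\[
\widehat{\Lambda}_n^w(t\mid x)-\Lambda(t\mid x)=\int_0^t\frac{\sum_j w_{n,j}R_j(s)\{\lambda(s\mid X_j)-\lambda(s\mid x)\}}{R_n^w(s\mid x)}\,ds+\int_0^t\frac{\sum_j w_{n,j}\,dM_j(s)}{R_n^w(s\mid x)}.
\]
Assumption 7 gives $\mathbb{P}(E_n)\to1$, so it suffices to work on $E_n$.

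\textbf{Step 2 (bias).} The integrand of the first term is a $w_{n,j}R_j$-weighted average of the differences $\lambda(s\mid X_j)-\lambda(s\mid x)$ over positively weighted $j$. It is therefore bounded by $\Delta_n(x)$, so the bias term is at most $\tau\Delta_n(x)$ uniformly in $t$. This vanishes in probability by Assumption 5.

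\textbf{Step 3 (fluctuation).} The second term $W_n(t)$ is a martingale because the integrand $w_{n,j}/R_n^w(s)$ is predictable. To keep the integrand bounded, stop at the first time $R_n^w$ drops below $c$, or replace $R_n^w$ by $R_n^w\vee c$, which agrees with $R_n^w$ on $E_n$. The predictable variation is then
\[
\langle W_n\rangle(\tau)=\int_0^\tau\frac{\sum_j w_{n,j}^2R_j(s)\lambda(s\mid X_j)}{(R_n^w(s)\vee c)^2}\,ds\le\frac{\tau\lambda_{\max}}{c^2}\sum_j w_{n,j}^2(x).
\]
Lenglart's inequality (or Doob's inequality, applied conditionally) gives
\[
\mathbb{P}\Bigl(\sup_{t\le\tau}|W_n(t)|>\eta'\Bigr)\le\frac{\delta}{\eta'^2}+\mathbb{P}\bigl(\langle W_n\rangle(\tau)>\delta\bigr).
\]
The last probability tends to $0$ by Assumption 6. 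Integrating out the conditioning, which is bounded convergence, yields $\sup_{t\le\tau}|\widehat{\Lambda}_n^w-\Lambda|\xrightarrow{p}0$.

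\textbf{Step 4 (transfer to $S$).} Since $S(t\mid x)=\exp\{-\Lambda(t\mid x)\}$ and the estimator is $\prod_{s\le t}\{1-\Delta\widehat{\Lambda}_n^w(s)\}$, we bound
\[
\bigl|\widehat{S}_{\mathrm{KESurv}}-\exp(-\widehat{\Lambda}_n^w)\bigr|\le\sum_{s\le t}(\Delta\widehat{\Lambda}_n^w(s))^2\le\max_s\Delta\widehat{\Lambda}_n^w(s)\cdot\widehat{\Lambda}_n^w(\tau).
\]
This uses $|\prod(1-a_i)-\exp(-\sum a_i)|\le\sum a_i^2$ for $a_i\in[0,1]$. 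The jumps satisfy $\Delta\widehat{\Lambda}_n^w\le\max_j w_{n,j}/c\le(\sum_j w_{n,j}^2)^{1/2}/c\to0$, and $\widehat{\Lambda}_n^w(\tau)$ is tight by Step 3. Finally, $|e^{-a}-e^{-b}|\le|a-b|$ completes the argument.

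\textbf{Main obstacle.} The delicate step is Step 3. The martingale structure must be justified conditionally on data-dependent weights, which is exactly where honesty (Assumption 3) is essential. The denominator must also be controlled only on the high-probability event $E_n$, which the stopping or truncation device is designed to handle. A secondary technical point is the fallback branch of the kernel when $Z_{\epsilon,m}(x)=0$. This is harmless, because the assumptions are imposed directly on the realized final weights $w_{n,j}(x)$, whichever branch produced them.
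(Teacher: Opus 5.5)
Your proposal is correct and follows the paper's proof essentially step for step. Both use the same decomposition of the weighted Nelson--Aalen estimator into bias and martingale terms on $\mathcal{E}_n$, the same bias bound $\tau\Delta_n(x)$, the predictable-variation bound $c^{-2}\lambda_{\max}\tau\sum_j w_{n,j}^2(x)$ with Lenglart's inequality, and transfer to survival functions using jumps bounded by $c^{-1}\max_j w_{n,j}(x)$. Your truncation $R_n^w\vee c$ and the explicit inequality $\bigl|\prod(1-a_i)-\exp(-\sum a_i)\bigr|\le\sum a_i^2$ just make rigorous two points the paper covers loosely or by citation: bounding the predictable variation ``on $\mathcal{E}_n$'', and continuity of the product-integral map.
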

\begin{proof}
The proof follows the standard structure used for conditional Kaplan--Meier and kernel survival estimators: first establish uniform convergence of a local cumulative hazard estimator, then transfer this convergence through the product-limit map~\citep{beran1981nonparametric,dabrowska1989uniform,chen2019nearest}. Under conditional independent censoring, the counting-process representation gives
\[
    dN_j(t)
    =
    R_j(t)\lambda(t\mid X_j)\,dt
    +
    d\mathcal{M}_j(t),
\]
where $\mathcal{M}_j(t)$ is a martingale with respect to the observed filtration~\citep{andersen1982cox,fleming1991counting,andersen1993statistical}. By the honest or cross-fitted construction in Assumption 3, the weights may be treated as fixed predictable quantities when applying the martingale argument conditionally on the weight-generating information.
Let
\[
    \mathcal{E}_n
    =
    \left\{
    \inf_{0\le s\le \tau}
    R_n^w(s\mid x)
    \ge c
    \right\}.
\]
By Assumption 7, $\mathbb{P}(\mathcal{E}_n)\to 1$. On the event $\mathcal{E}_n$, the weighted risk denominator is uniformly positive, and substituting the counting-process decomposition into the weighted cumulative hazard estimator gives
\[
\begin{aligned}
    \widehat{\Lambda}_n^w(t\mid x)
    &=
    \int_{0}^{t}
    \frac{
    \sum_{j=1}^{n}
    w_{n,j}(x)dN_j(s)
    }{
    R_n^w(s\mid x)
    }                                                        \\
    &=
    \int_{0}^{t}
    \frac{
    \sum_{j=1}^{n}
    w_{n,j}(x)R_j(s)\lambda(s\mid X_j)
    }{
    R_n^w(s\mid x)
    }\,ds
    +
    \int_{0}^{t}
    \frac{
    \sum_{j=1}^{n}
    w_{n,j}(x)d\mathcal{M}_j(s)
    }{
    R_n^w(s\mid x)
    } .
\end{aligned}
\]
Subtracting the target cumulative hazard $\Lambda(t\mid x)$ yields
\[
    \widehat{\Lambda}_n^w(t\mid x)
    -
    \Lambda(t\mid x)
    =
    B_n(t)+\mathcal{M}_n^w(t),
\]
where
\[
    B_n(t)
    =
    \int_{0}^{t}
    \frac{
    \sum_{j=1}^{n}
    w_{n,j}(x)R_j(s)
    \{\lambda(s\mid X_j)-\lambda(s\mid x)\}
    }{
    R_n^w(s\mid x)
    }\,ds
\]
is the local bias term, and
\[
    \mathcal{M}_n^w(t)
    =
    \int_{0}^{t}
    \frac{
    \sum_{j=1}^{n}
    w_{n,j}(x)d\mathcal{M}_j(s)
    }{
    R_n^w(s\mid x)
    }
\]
is the martingale term. We first control the bias term. On $\mathcal{E}_n$ the denominator is positive. Because
\[
    R_n^w(s\mid x)
    =
    \sum_{j=1}^{n}
    w_{n,j}(x)R_j(s),
\]
we have, for all $t\le \tau$,
\[
\begin{aligned}
    |B_n(t)|
    &\le
    \int_{0}^{t}
    \frac{
    \sum_{j=1}^{n}
    w_{n,j}(x)R_j(s)
    \left|
    \lambda(s\mid X_j)-\lambda(s\mid x)
    \right|
    }{
    R_n^w(s\mid x)
    }\,ds                                                   \\
    &\le
    \int_{0}^{t}
    \Delta_n(x)\,ds                                         \\
    &\le
    \tau \Delta_n(x).
\end{aligned}
\]
By Assumption 5,
\[
    \Delta_n(x)\xrightarrow{p}0.
\]
Therefore,
\[
    \sup_{0\le t\le \tau}
    |B_n(t)|
    \xrightarrow{p}0 .
\]
We next control the martingale term. On $\mathcal{E}_n$ the predictable variation of $\mathcal{M}_n^w(t)$ is bounded by
\[
\begin{aligned}
    \left\langle \mathcal{M}_n^w\right\rangle(\tau)
    &\le
    c^{-2}
    \sum_{j=1}^{n}
    w_{n,j}^{2}(x)
    \int_{0}^{\tau}
    R_j(s)\lambda(s\mid X_j)\,ds .
\end{aligned}
\]
There are no cross-variation terms because, conditional on the weight-generating information, the subject-specific martingales are orthogonal, having no common event jumps under the absolute-continuity condition in Assumption 4.
By Assumption 4, the event hazard is uniformly bounded by $\lambda_{\max}$ on $[0,\tau]$. Thus
\[
    \int_{0}^{\tau}
    R_j(s)\lambda(s\mid X_j)\,ds
    \le
    \lambda_{\max}\tau .
\]
It follows that
\[
    \left\langle \mathcal{M}_n^w\right\rangle(\tau)
    \le
    c^{-2}\lambda_{\max}\tau
    \sum_{j=1}^{n}
    w_{n,j}^{2}(x).
\]
By Assumption 6,
\[
    \sum_{j=1}^{n}
    w_{n,j}^{2}(x)
    \xrightarrow{p}0,
\]
and hence
\[
    \left\langle \mathcal{M}_n^w\right\rangle(\tau)
    \xrightarrow{p}0 .
\]
Lenglart's domination inequality then implies
\[
    \sup_{0\le t\le \tau}
    |\mathcal{M}_n^w(t)|
    \xrightarrow{p}0
\]
conditionally on the weight-generating information, and therefore also unconditionally~\citep{lenglart1977relation,fleming1991counting,andersen1993statistical}. Combining the bias and martingale bounds gives
\[
    \sup_{0\le t\le \tau}
    \left|
    \widehat{\Lambda}_n^w(t\mid x)
    -
    \Lambda(t\mid x)
    \right|
    \xrightarrow{p}0 .
\]
It remains to transfer the result from cumulative hazards to survival functions. The KESurv estimator is the product-integral transform of $\widehat{\Lambda}_n^w$, while the target survival function is the product-integral transform of $\Lambda$~\citep{fleming1991counting,andersen1993statistical}. On $[0,\tau]$, $\Lambda(t\mid x)$ is finite because the event hazard is bounded. Moreover, on $\mathcal{E}_n$ the maximal jump of $\widehat{\Lambda}_n^w$ is bounded by
\[
    \sup_{s'\le \tau}
    \frac{
    dN_n^w(s'\mid x)
    }{
    R_n^w(s'\mid x)
    }
    \le
    c^{-1}
    \max_{1\le j\le n}
    w_{n,j}(x).
\]
The inequality uses the no-common-event-time condition in Assumption 4, which implies that at most one subject has an event at any event time with probability one. Since
\[
    \max_{1\le j\le n}
    w_{n,j}(x)
    \le
    \left\{
    \sum_{j=1}^{n}
    w_{n,j}^{2}(x)
    \right\}^{1/2}
    \xrightarrow{p}0,
\]
the jumps of the estimated cumulative hazard vanish. The continuity of the product-integral map under uniform convergence of cumulative hazards with vanishing jumps then yields
\[
    \sup_{0\le t\le \tau}
    \left|
    \widehat{S}_{\mathrm{KESurv}}(t\mid x;\epsilon,m)
    -
    S(t\mid x)
    \right|
    \xrightarrow{p}0 .
\]
This proves the theorem.
\end{proof}


\begin{thebibliography}{44}
\providecommand{\natexlab}[1]{#1}
\providecommand{\url}[1]{\texttt{#1}}
\expandafter\ifx\csname urlstyle\endcsname\relax
  \providecommand{\doi}[1]{doi: #1}\else
  \providecommand{\doi}{doi: \begingroup \urlstyle{rm}\Url}\fi

\bibitem[Aalen(1989)]{aalen1989linear}
Odd~O Aalen.
\newblock A linear regression model for the analysis of life times.
\newblock \emph{Statistics in medicine}, 8\penalty0 (8):\penalty0 907--925,
  1989.

\bibitem[Akritas(1994)]{akritas1994nearest}
Michael~G. Akritas.
\newblock Nearest neighbor estimation of a bivariate distribution under random
  censoring.
\newblock \emph{The Annals of Statistics}, 22\penalty0 (3):\penalty0
  1299--1327, 1994.

\bibitem[Andersen and Gill(1982)]{andersen1982cox}
Per~Kragh Andersen and Richard~D. Gill.
\newblock Cox's regression model for counting processes: A large sample study.
\newblock \emph{The Annals of Statistics}, 10\penalty0 (4):\penalty0
  1100--1120, 1982.

\bibitem[Andersen et~al.(1993)Andersen, Borgan, Gill, and
  Keiding]{andersen1993statistical}
Per~Kragh Andersen, {\O}rnulf Borgan, Richard~D. Gill, and Niels Keiding.
\newblock \emph{Statistical Models Based on Counting Processes}.
\newblock Springer, 1993.

\bibitem[Antolini et~al.(2005)Antolini, Boracchi, and
  Biganzoli]{antolini2005time}
Laura Antolini, Patrizia Boracchi, and Elia Biganzoli.
\newblock A time-dependent discrimination index for survival data.
\newblock \emph{Statistics in medicine}, 24\penalty0 (24):\penalty0 3927--3944,
  2005.

\bibitem[Athey et~al.(2019)Athey, Tibshirani, and Wager]{athey2019generalized}
Susan Athey, Julie Tibshirani, and Stefan Wager.
\newblock Generalized random forests.
\newblock \emph{The Annals of Statistics}, 47\penalty0 (2):\penalty0
  1148--1178, 2019.

\bibitem[Beran(1981)]{beran1981nonparametric}
Rudolf Beran.
\newblock Nonparametric regression with randomly censored survival data.
\newblock Technical report, University of California, Berkeley, 1981.

\bibitem[Bergstra et~al.(2011)Bergstra, Bardenet, Bengio, and
  K{\'e}gl]{bergstra2011algorithms}
James Bergstra, R{\'e}mi Bardenet, Yoshua Bengio, and Bal{\'a}zs K{\'e}gl.
\newblock Algorithms for hyper-parameter optimization.
\newblock In \emph{Advances in Neural Information Processing Systems},
  volume~24, 2011.

\bibitem[Biau et~al.(2016)Biau, Fischer, Guedj, and Malley]{biau2016cobra}
G{\'e}rard Biau, Aur{\'e}lie Fischer, Benjamin Guedj, and James~D Malley.
\newblock Cobra: A combined regression strategy.
\newblock \emph{Journal of Multivariate Analysis}, 146:\penalty0 18--28, 2016.

\bibitem[Chen(2019)]{chen2019nearest}
George Chen.
\newblock Nearest neighbor and kernel survival analysis: Nonasymptotic error
  bounds and strong consistency rates.
\newblock In \emph{International Conference on Machine Learning}, pages
  1001--1010. PMLR, 2019.

\bibitem[Chen(2020)]{chen2020deep}
George~H Chen.
\newblock Deep kernel survival analysis and subject-specific survival time
  prediction intervals.
\newblock In \emph{Machine Learning for Healthcare Conference}, pages 537--565.
  PMLR, 2020.

\bibitem[Chen(2024)]{chen2024survival}
George~H Chen.
\newblock Survival kernets: Scalable and interpretable deep kernel survival
  analysis with an accuracy guarantee.
\newblock \emph{Journal of Machine Learning Research}, 25\penalty0
  (40):\penalty0 1--78, 2024.

\bibitem[Chen and Guestrin(2016)]{chen2016xgboost}
Tianqi Chen and Carlos Guestrin.
\newblock Xgboost: A scalable tree boosting system.
\newblock In \emph{Proceedings of the 22nd acm sigkdd international conference
  on knowledge discovery and data mining}, pages 785--794, 2016.

\bibitem[Connors et~al.(1995)]{connors1995controlled}
Alfred~F Connors et~al.
\newblock A controlled trial to improve care for seriously ill hospitalized
  patients: The study to understand prognoses and preferences for outcomes and
  risks of treatments (support).
\newblock \emph{Jama}, 274\penalty0 (20):\penalty0 1591--1598, 1995.

\bibitem[Cox(1972)]{cox1972regression}
David~R Cox.
\newblock Regression models and life-tables.
\newblock \emph{Journal of the Royal Statistical Society: Series B
  (Methodological)}, 34\penalty0 (2):\penalty0 187--202, 1972.

\bibitem[Dabrowska(1987)]{dabrowska1987nonparametric}
Dorota~M. Dabrowska.
\newblock Non-parametric regression with censored survival time data.
\newblock \emph{Scandinavian Journal of Statistics}, 14\penalty0 (3):\penalty0
  181--197, 1987.

\bibitem[Dabrowska(1989)]{dabrowska1989uniform}
Dorota~M. Dabrowska.
\newblock Uniform consistency of the kernel conditional kaplan--meier estimate.
\newblock \emph{The Annals of Statistics}, 17\penalty0 (3):\penalty0
  1157--1167, 1989.

\bibitem[Dispenzieri et~al.(2012)]{dispenzieri2012use}
Angela Dispenzieri et~al.
\newblock Use of nonclonal serum immunoglobulin free light chains to predict
  overall survival in the general population.
\newblock \emph{Mayo Clinic Proceedings}, 87\penalty0 (6):\penalty0 517--523,
  2012.

\bibitem[Fisher(1925)]{fisher1925statistical}
Ronald~Aylmer Fisher.
\newblock \emph{Statistical Methods for Research Workers}.
\newblock Oliver and Boyd, Edinburgh, 1925.

\bibitem[Fleming and Harrington(1991)]{fleming1991counting}
Thomas~R. Fleming and David~P. Harrington.
\newblock \emph{Counting Processes and Survival Analysis}.
\newblock Wiley, 1991.

\bibitem[Foekens et~al.(2000)]{foekens2000urokinase}
John~A Foekens et~al.
\newblock The urokinase system of plasminogen activation and prognosis in 2780
  breast cancer patients.
\newblock \emph{Cancer research}, 60\penalty0 (3):\penalty0 636--643, 2000.

\bibitem[for the Molecular Classification~of
  Lung~Adenocarcinoma(2008)]{director2008gene}
Directors Challenge~Consortium for the Molecular Classification~of
  Lung~Adenocarcinoma.
\newblock Gene expression--based survival prediction in lung adenocarcinoma: a
  multi-site, blinded validation study.
\newblock \emph{Nature medicine}, 14\penalty0 (8):\penalty0 822--827, 2008.

\bibitem[Goswami and Dey(2023)]{goswami2023area}
Rahul Goswami and Arabin~Kr Dey.
\newblock Area-norm cobra on conditional survival prediction.
\newblock \emph{arXiv preprint arXiv:2309.00417}, 2023.

\bibitem[Graf et~al.(1999)Graf, Schmoor, Sauerbrei, and
  Schumacher]{graf1999assessment}
Erika Graf, Claudia Schmoor, Willi Sauerbrei, and Martin Schumacher.
\newblock Assessment and comparison of prognostic classification schemes for
  survival data.
\newblock \emph{Statistics in medicine}, 18\penalty0 (17-18):\penalty0
  2529--2545, 1999.

\bibitem[Haider et~al.(2020)Haider, Hoehn, Davis, and
  Greiner]{haider2020effective}
Humza Haider, Bret Hoehn, Sarah Davis, and Russell Greiner.
\newblock Effective ways to build and evaluate individual survival
  distributions.
\newblock \emph{Journal of Machine Learning Research}, 21\penalty0
  (85):\penalty0 1--63, 2020.

\bibitem[Harrell(2015)]{frank2015regression}
Frank~E Harrell.
\newblock Regression modeling strategies with applications to linear models,
  logistic and ordinal regression, and survival analysis, 2015.

\bibitem[Huang et~al.(2022)Huang, Lyu, Hou, and Chen]{huang2022nonparametric}
Xinghui Huang, Jingjing Lyu, Yawen Hou, and Zheng Chen.
\newblock A nonparametric statistical method for two crossing survival curves.
\newblock \emph{Communications in Statistics-Simulation and Computation},
  51\penalty0 (9):\penalty0 5041--5050, 2022.

\bibitem[Ishwaran and Kogalur(2010)]{ishwaran2010consistency}
Hemant Ishwaran and Udaya~B. Kogalur.
\newblock Consistency of random survival forests.
\newblock \emph{Statistics \& Probability Letters}, 80\penalty0
  (13--14):\penalty0 1056--1064, 2010.

\bibitem[Ishwaran et~al.(2008)Ishwaran, Kogalur, Blackstone, and
  Lauer]{ishwaran2008random}
Hemant Ishwaran, Udaya~B Kogalur, Eugene~H Blackstone, and Michael~S Lauer.
\newblock Random survival forests.
\newblock \emph{The Annals of Applied Statistics}, 2008.

\bibitem[Kalbfleisch and Prentice(2011)]{kalbfleisch2011statistical}
John~D Kalbfleisch and Ross~L Prentice.
\newblock \emph{The statistical analysis of failure time data}.
\newblock John Wiley \& Sons, 2011.

\bibitem[Kaplan and Meier(1958)]{kaplan1958nonparametric}
Edward~L. Kaplan and Paul Meier.
\newblock Nonparametric estimation from incomplete observations.
\newblock \emph{Journal of the American Statistical Association}, 53\penalty0
  (282):\penalty0 457--481, 1958.

\bibitem[Katzman et~al.(2018)]{katzman2018deepsurv}
Jared~L Katzman et~al.
\newblock Deepsurv: personalized treatment recommender system using a cox
  proportional hazards deep neural network.
\newblock \emph{BMC medical research methodology}, 18:\penalty0 1--12, 2018.

\bibitem[Kvamme et~al.(2019)Kvamme, Borgan, and Scheel]{kvamme2019time}
H{\aa}vard Kvamme, {\O}rnulf Borgan, and Ida Scheel.
\newblock Time-to-event prediction with neural networks and cox regression.
\newblock \emph{Journal of machine learning research}, 20\penalty0
  (129):\penalty0 1--30, 2019.

\bibitem[Lenglart(1977)]{lenglart1977relation}
E.~Lenglart.
\newblock Relation de domination entre deux processus.
\newblock \emph{Annales de l'I.H.P. Probabilit{\'e}s et statistiques},
  13\penalty0 (2):\penalty0 171--179, 1977.

\bibitem[Lin(2007)]{lin2007breslow}
D.~Y. Lin.
\newblock On the breslow estimator.
\newblock \emph{Lifetime data analysis}, 13:\penalty0 471--480, 2007.

\bibitem[Lin and Jeon(2006)]{lin2006random}
Yi~Lin and Yongho Jeon.
\newblock Random forests and adaptive nearest neighbors.
\newblock \emph{Journal of the American Statistical Association}, 101\penalty0
  (474):\penalty0 578--590, 2006.

\bibitem[Meinshausen(2006)]{meinshausen2006quantile}
Nicolai Meinshausen.
\newblock Quantile regression forests.
\newblock \emph{Journal of Machine Learning Research}, 7:\penalty0 983--999,
  2006.

\bibitem[Ranganath et~al.(2016)Ranganath, Perotte, Elhadad, and
  Blei]{ranganath2016deep}
Rajesh Ranganath, Adler Perotte, No{\'e}mie Elhadad, and David Blei.
\newblock Deep survival analysis.
\newblock In \emph{Machine Learning for Healthcare Conference}, pages 101--114.
  PMLR, 2016.

\bibitem[Rueda et~al.(2019)]{rueda2019dynamics}
Oscar~M Rueda et~al.
\newblock Dynamics of breast-cancer relapse reveal late-recurring er-positive
  genomic subgroups.
\newblock \emph{Nature}, 567\penalty0 (7748):\penalty0 399--404, 2019.

\bibitem[Sargent et~al.(2005)]{sargent2005disease}
Daniel~J Sargent et~al.
\newblock Disease-free survival versus overall survival as a primary end point
  for adjuvant colon cancer studies: individual patient data from 20,898
  patients on 18 randomized trials.
\newblock \emph{Journal of Clinical Oncology}, 23\penalty0 (34):\penalty0
  8664--8670, 2005.

\bibitem[Schumacher et~al.(1994)]{schumacher1994randomized}
M~Schumacher et~al.
\newblock Randomized 2 x 2 trial evaluating hormonal treatment and the duration
  of chemotherapy in node-positive breast cancer patients. german breast cancer
  study group.
\newblock \emph{Journal of Clinical Oncology}, 12\penalty0 (10):\penalty0
  2086--2093, 1994.

\bibitem[Snoek et~al.(2012)Snoek, Larochelle, and Adams]{snoek2012practical}
Jasper Snoek, Hugo Larochelle, and Ryan~P. Adams.
\newblock Practical bayesian optimization of machine learning algorithms.
\newblock In \emph{Advances in Neural Information Processing Systems},
  volume~25, 2012.

\bibitem[Van De~Vijver et~al.(2002)]{van2002gene}
Marc~J Van De~Vijver et~al.
\newblock A gene-expression signature as a predictor of survival in breast
  cancer.
\newblock \emph{New England Journal of Medicine}, 347\penalty0 (25):\penalty0
  1999--2009, 2002.

\bibitem[Zhong et~al.(2021)]{zhong2021deep}
Qixian Zhong et~al.
\newblock Deep extended hazard models for survival analysis.
\newblock \emph{Advances in Neural Information Processing Systems},
  34:\penalty0 15111--15124, 2021.

\end{thebibliography}
\end{document}